\theoremstyle{definition} \newtheorem{defn}{Definition}
\theoremstyle{plain} \newtheorem{prop}[defn]{Proposition}
\theoremstyle{plain} 
\theoremstyle{plain} 
\theoremstyle{plain} 
\theoremstyle{remark} \newtheorem{rmk}[defn]{Remark}
\theoremstyle{remark}
\newcommand{\term}[1]{\textcolor{BlueViolet}{\textit{{#1}}}}
\def\namedlabel#1#2{\begingroup
    #2%
    \def\@currentlabel{#2}%
    \phantomsection\label{#1}\endgroup
}
\newcommand*{\defeq}{\mathrel{\vcenter{\baselineskip0.5ex \lineskiplimit0pt     
                     \hbox{\scriptsize.}\hbox{\scriptsize.}}}=}
\newcommand{\overbar}[1]{\mkern 1.5mu\overline{\mkern-1.5mu#1\mkern-1.5mu}\mkern 1.5mu}
\newcommand{\Abs}[1]{\lVert{#1}\rVert} 
\newcommand{\abs}[1]{\lvert{#1}\rvert} 
\def\ddist{\mu} 
\newcommand{\err}[1]{\mathcal{E}({#1})} 
\DeclareMathOperator{\exx}{\mathbf{E}} 
\DeclareMathOperator{\prr}{\mathbf{P}} 
\def\QA{{\texttt{QA}}} 
\def\QP{\texttt{QP}} 
\newcommand{\rdv}[1]{\mathsf{#1}} 
\DeclareMathOperator{\risk}{R} 
\def\RR{\mathbb{R}} 
\DeclareMathOperator{\sign}{sign} 
\def\WW{\mathcal{W}} 
\def\Z{\mathbf{Z}} 
\def\ZZ{\mathcal{Z}} 
\begin{document}

\title{\textbf{Soft ascent-descent as a stable and flexible\\alternative to flooding}}
\author{
  Matthew J.~Holland\thanks{Corresponding author.}\\
  Osaka University
  \and
  Kosuke Nakatani\\
  Osaka University
}
\date{} 

\maketitle

\begin{abstract}
As a heuristic for improving test accuracy in classification, the ``flooding'' method proposed by \citet{ishida2020a} sets a threshold for the average surrogate loss at training time; above the threshold, gradient descent is run as usual, but below the threshold, a switch to gradient \emph{ascent} is made. While setting the threshold is non-trivial and is usually done with validation data, this simple technique has proved remarkably effective in terms of accuracy. On the other hand, what if we are also interested in other metrics such as model complexity or average surrogate loss at test time? As an attempt to achieve better overall performance with less fine-tuning, we propose a softened, pointwise mechanism called SoftAD (soft ascent-descent) that downweights points on the borderline, limits the effects of outliers, and retains the ascent-descent effect of flooding, with no additional computational overhead. We contrast formal stationarity guarantees with those for flooding, and empirically demonstrate how SoftAD can realize classification accuracy competitive with flooding (and the more expensive alternative SAM) while enjoying a much smaller loss generalization gap and model norm.
\end{abstract}

\tableofcontents

\section{Introduction}\label{sec:intro}

Modern machine learning makes use of sophisticated models that are trained through optimization of non-convex objective functions, which typically admit numerous local minima that make for natural candidates when taken at face value. While many such candidates are indeed essentially ``optimal'' from the viewpoint of classification error rates or other average losses incurred at training time, these often turn out to be highly sub-optimal in terms of \emph{performance at test time}. It goes without saying that understanding and closing this gap is the problem of ``generalization'' that underlies most machine learning research \citep{jiang2020a,dziugaite2020a,johnson2023a}.

When we are faced with multiple candidates which are essentially optimal and thus indistinguishable in terms of some ``base'' objective function (e.g., the average loss) at training time, one of best-known heuristics for identifying good candidates is that of the ``landscape'' or ``geometry'' of the base objective in a neighborhood around each candidate. Roughly speaking, one expects that candidates in regions which are in some sense ``flat'' (often said to be less ``sharp'') tend to perform better at test time. Strictly speaking, flatness is not necessary for generalization \citep{dinh2017b}, but our intuition can often be empirically verified to be correct, as good generalization is regularly observed in flat regions where the eigenvalues of the Hessian are mostly concentrated near zero \citep{chaudhari2017a}. The spectral density of the Hessian can in principle be used to evaluate sharpness, and has well-known links to norms that can be used for explicit regularization \citep{karakida2019a}, but for large-scale neural network training in practice, first-order approximations have shown the greatest utility. In particular, the sharpness-aware minimization (SAM) algorithm of \citet{foret2021a}, extended for scale invariance by \citet{kwon2021a} and later captured as a special case of the gradient norm penalization (GNP) scheme of \citet{zhao2022a}, has shown state-of-the-art performance on a variety of deep learning tasks. All of these first-order procedures can be cast as (forward) finite-difference approximations of the curvature \citep{karakida2023a}, requiring at least double the computational cost of vanilla gradient descent (GD) at each iteration.

As an alternative approach, the ``flooding'' technique of \citet{ishida2020a} is worthy of attention for surprising improvements in test accuracy despite its apparent simplicity. Flooding is done as follows: fix a threshold $\theta$ before training and run vanilla GD until the average loss goes below $\theta$, and while below this threshold, run gradient \emph{ascent} rather than descent (see \S{\ref{sec:bg_flooding}} for details). Flooding appeared before SAM in the literature, but near the threshold, flooding can iterate between optimizing the empirical risk and the squared gradient norm (penalizing sharpness), establishing the former as an inexpensive alternative to the latter. On the other hand, it is not at all obvious how the flooding threshold $\theta$ should be set given a particular data distribution and model class, and at present there is no methodology for settings which are ``optimal'' or at least ``sufficient'' from the viewpoint of test accuracy. More importantly, what if we are interested in performance criteria going beyond that of classification accuracy? Flooding just says ``make the average loss as close to $\theta$ as possible,'' and we hypothesize that this requirement is too weak to encourage low model complexity and/or good generalization in terms of losses, while also keeping test accuracy high.

In this work, we investigate the validity of this hypothesis, and consider the impact of making a stronger requirement, namely to ask the algorithm to ``make sure the loss distribution is \emph{well-concentrated} near $\theta$.'' We show in \S{\ref{sec:method}} that this can be implemented by introducing a smooth wrapper, applicable to any loss, which penalizes both over-performing and under-performing examples in a per-point fashion, instead of applying a hard threshold to the whole (mini-)batch as in flooding. We call this proposed procedure ``soft ascent-decent'' (SoftAD), and provide a detailed comparison with the flooding technique, highlighting the smoothness of SoftAD with implications in terms of formal stationarity guarantees, and emphasize how our mechanism leads to update directions that are qualitatively distinct from those used in flooding. Through rigorous empirical tests using both simulated and real-world benchmark classification datasets, featuring neural networks both large and small, we discover that compared with ERM, SAM, and flooding, the proposed SoftAD achieves far and away the smallest generalization error in terms of the base loss, while maintaining competitive accuracy and small model norms, without any explicit regularization.

Before diving into the main results just described, we introduce notation and background concepts in \S{\ref{sec:bg}}. SoftAD is introduced in \S{\ref{sec:method}}, where we make basic empirical comparisons to flooding in \S{\ref{sec:method_compare_empirical}}, and contrast formal guarantees of stationarity for these two methods in \S{\ref{sec:theory}}. Our main empirical test results are given in \S{\ref{sec:empirical}}, with discussion and concluding remarks wrapping up the paper in \S{\ref{sec:conclusion}}. All detailed proofs and supplementary empirical results are relegated to the appendix.

\section{Background}\label{sec:bg}

To begin, we formulate performance metrics characterizing the problem of interest. Let $\WW \subseteq \RR^{d}$ parameterize our hypothesis class, let $\ZZ$ denote the set to which individual data points $z$ belong. Let $\rdv{Z}$ represent a random (test) data point with distribution $\ddist$ over $\ZZ$. For classification, where our data takes the form $\rdv{Z} = (\rdv{X},\rdv{Y})$ and for each $w \in \WW$ we let $\widehat{\rdv{Y}}(w)$ denote the predicted label given $\rdv{X}$, the traditional performance metric of interest is the error probability at test time, denoted by
\begin{align}\label{eqn:error_prob}
\err{w} \defeq \prr\left\{\widehat{\rdv{Y}}(w) \neq \rdv{Y}\right\}.
\end{align}
When we refer to test \term{accuracy}, we mean the probability of correct prediction, namely $1-\err{w}$. Even when high accuracy is desired, it is standard to make use of computationally congenial surrogate loss functions for training. Let $\ell: \RR^{d} \times \ZZ \to \RR$ denote a generic loss function to be used for training. For example, if $z = (x,y)$ represents (input, label) pairs, then a typical choice of $\ell$ would be the cross-entropy loss. While the model is left abstract in our notation, note that for non-linear models such as neural networks, the mapping $w \mapsto \ell(w;z)$ may be non-convex and non-smooth over $\WW$. As with the error probability (\ref{eqn:error_prob}), the expected loss (or ``risk'')
\begin{align}\label{eqn:risk}
\risk_{\ddist}(w) \defeq \exx_{\ddist}\ell(w;\rdv{Z})
\end{align}
is also an important indicator of classifier performance. Both (\ref{eqn:error_prob}) and (\ref{eqn:risk}) are ideal quantities since $\ddist$ is unknown and $\rdv{Z}$ is not observed at training time. We assume the learning algorithm has access to an independent sample of $n$ observations from $\ddist$, denoted by $\Z_{n} \defeq (\rdv{Z}_{1},\ldots,\rdv{Z}_{n})$ for convenience. Traditional machine learning algorithms are driven by the empirical risk, denoted here by $\rdv{R}_{n}(w) \defeq (1/n)\sum_{i=1}^{n} \ell(w;\rdv{Z}_{i})$, in that they seek out (local) minima of $\rdv{R}_{n}$. In this paper, we use the term \term{empirical risk minimization} (\term{ERM}) to refer to algorithms that directly apply an optimizer to $\rdv{R}_{n}$. Under sophisticated models, the usual ERM objective $\rdv{R}_{n}(\cdot)$ tends to admit a complex landscape. As discussed in \S{\ref{sec:intro}}, numerous alternatives to ERM have been proposed, with the aim of minimizing $\err{\cdot}$ more reliably; next we take a closer look at one such technique, called ``flooding.''

\subsection{Flooding}\label{sec:bg_flooding}

The basic intuition underlying the proposal of \citet{ishida2020a} is that while minimizing $\rdv{R}_{n}(\cdot)$ may be sufficient for maximizing the training accuracy, it need not be \emph{necessary}, and from the perspective of optimizing $\err{\cdot}$, it may be worth it to sacrifice $\rdv{R}_{n}(\cdot)$ and even $\risk_{\ddist}(\cdot)$. Fixing a threshold $\theta \in \RR$, the ``flooded'' objective is
\begin{align}\label{eqn:flood_obj}
\rdv{F}_{n}(w;\theta) \defeq \theta + \abs{\rdv{R}_{n}(w) - \theta}.
\end{align}
This objective can be implemented as a simple wrapper around typical loss functions, to which off-the-shelf gradient-based optimizers can be applied; running vanilla sub-gradient descent yields a sequence $(w_{1},w_{2},\ldots)$ generated using the update
\begin{align}\label{eqn:flood_batch}
w_{t+1} = w_{t} - \alpha \sign\left( \rdv{R}_{n}(w_{t}) - \theta \right) \nabla\rdv{R}_{n}(w_{t})
\end{align}
for all $t \geq 1$, where $\sign(x) \defeq x/\abs{x}$ for all $x \neq 0$ and $\sign(0) \defeq 0$, and $\alpha > 0$ is a fixed step size. The update (\ref{eqn:flood_batch}) characterizes what we call the \term{Flooding} algorithm. From the above definitions, $\rdv{F}_{n}(w;\theta)$ is minimal if and only if $\rdv{R}_{n}(w) = \theta$. That is, Flooding seeks out $w$ such that the training loss distribution induced by $(\ell(w;\rdv{Z}_{1}),\ldots,\ell(w;\rdv{Z}_{n}))$ has a mean which is close to $\theta$; nothing more, nothing less.

\subsection{Links to sharpness}\label{sec:sharpness_links}

Assuming for now that the loss is differentiable, it has been appreciated for some time that the \emph{distribution} of the loss gradients $\nabla \ell(w;\rdv{Z})$ can convey important information about generalization \citep{zhang2020b}, and in particular the role of gradient regularization, both implicit and explicit, is receiving significant attention \citep{barrett2021a,smith2021a}.\footnote{Throughout this paper, all gradients are taken with respect to $w \mapsto \ell(w;z)$, assumed to exist on an open set containing $\WW$ for all $z \in \ZZ$. It should however be noted that gradients taken with respect to parts of the data $z$ have been used in objective function design for years; see \citet{drucker1992a}.} As a concrete example of explicit regularization, consider modifying the ERM objective using the squared Euclidean norm as
\begin{align}\label{eqn:gr_objective}
\widetilde{\rdv{R}}_{n}(w;\lambda) \defeq \rdv{R}_{n}(w) + \frac{\lambda}{2} \Abs{\nabla \rdv{R}_{n}(w)}^{2}
\end{align}
where $\lambda \geq 0$ controls the degree of penalization. If one is to minimize this objective in $w$ directly using gradient descent, this involves computing
\begin{align*}
\nabla \widetilde{\rdv{R}}_{n}(w;\lambda) = \nabla \rdv{R}_{n}(w) + \lambda \nabla^{2}\rdv{R}_{n}(w)\left(\nabla\rdv{R}_{n}(w)\right)
\end{align*}
and thus doing matrix multiplication using a $d \times d$ Hessian $\nabla^{2}\rdv{R}_{n}(w)$, an unattractive proposition when $d$ is large. A linear approximation to the expensive term can be obtained via
\begin{align*}
\frac{\nabla\rdv{R}_{n}(w + au) - \nabla\rdv{R}_{n}(w)}{a} \approx \nabla^{2}\rdv{R}_{n}(w)\left(u\right)
\end{align*}
where $u \in \RR^{d}$ is arbitrary and $\abs{a}$ is small; see \citet[\S{3.3}]{zhao2022a} for example. Applying this to approximate $\nabla \widetilde{\rdv{R}}_{n}(w;\lambda)$, we have
\begin{align}\label{eqn:finite_diff_approx}
\nabla \rdv{R}_{n}(w) + \frac{\lambda}{a}\left( \nabla\rdv{R}_{n}(w + a\nabla\rdv{R}_{n}(w)) - \nabla\rdv{R}_{n}(w) \right) \approx \nabla \widetilde{\rdv{R}}_{n}(w;\lambda).
\end{align}
The iterative update directions used by the \term{SAM} algorithm of \citet{foret2021a} are captured by setting $a = \lambda$, offering a nice link between loss-based sharpness control and gradient regularization. The extension of SAM in GNP \citep{zhao2022a} can be expressed by an analogous derivation, replacing the squared norm $\Abs{\cdot}^{2}$ in (\ref{eqn:gr_objective}) with $\Abs{\cdot}$. Using updates of the form given in (\ref{eqn:finite_diff_approx}) with $a > 0$ is called a ``forward'' finite-difference (FD) approach to explicit gradient regularization (GR) (henceforth, \term{FD-GR}), and clearly requires two gradient calls per update.\footnote{In the current example, one call at $w$, and another call at $w + a\nabla\rdv{R}_{n}(w)$.} Better precision is available using ``centered'' FD, at the cost of additional gradient calls \citep{karakida2023a}. How does this all relate to Flooding? In repeating the update (\ref{eqn:flood_batch}), once the empirical risk $\rdv{R}_{n}$ goes below $\theta$ and the algorithm switches from ascent to descent, it is straightforward to show conditions where this leads to iteration between minimizing $\rdv{R}_{n}$ and $\Abs{\nabla \rdv{R}_{n}(w)}^{2}$. We give more details in \S{\ref{sec:flooding_and_sharpness}}.

\section{Soft ascent-descent}\label{sec:method}

With the context of \S{\ref{sec:bg}} in place, we consider making two qualitative changes to the Flooding update (\ref{eqn:flood_batch}), described as follows.
\begin{enumerate}
\item \textbf{Pointwise thresholds:} Invert the order of applying $\rdv{R}_{n}(\cdot)$ and $\sign(\cdot)$, i.e., do summation over data points \emph{after} per-loss truncation.
\item \textbf{Soft truncation:} Replace the hard threshold $\sign(\cdot)$ with a continuous, bounded, monotonic (increasing) function $\phi(\cdot)$ satisfying $\phi(0)=0$.
\end{enumerate}
The reason for making the thresholds pointwise is to allow the algorithm to view ``ascent'' and ``descent'' from the perspective of individual losses (rather than bundled up in $\rdv{R}_{n}$), making it possible to utilize a sum of both ascent and descent update directions.\footnote{This cannot be achieved by taking a mini-batch of size 1 when using Flooding, since the number of gradients summed over always equals the number of losses averaged when checking the ascent/descent condition.} To make this sum a \emph{weighted} sum, the soft truncation using $\phi$ plays a key role. Keeping $\phi$ bounded limits the impact of errant loss values, while the other assumptions allow for both ascent and descent, with ``borderline'' points near the threshold given \emph{less} weight. Written explicitly as an empirical objective function, we use
\begin{align}\label{eqn:softad_obj}
\rdv{S}_{n}(w;\theta) \defeq \theta + \frac{1}{n} \sum_{i=1}^{n} \rho(\ell(w;\rdv{Z}_{i})-\theta)
\end{align}
where once again $\theta \in \RR$ is a fixed threshold, and we set $\rho(x) \defeq \sqrt{x^{2}+1}-1$. Running vanilla GD on $\rdv{S}_{n}(\cdot;\theta)$ in (\ref{eqn:softad_obj}) yields the update
\begin{align}\label{eqn:softAD_update}
w_{t+1} = w_{t} - \frac{\alpha}{n} \sum_{i=1}^{n} \phi(\ell(w_{t};\rdv{Z}_{i})-\theta) \nabla\ell(w_{t};\rdv{Z}_{i}),
\end{align}
with $\phi(x) \defeq \rho^{\prime}(x) = x / \sqrt{x^{2}+1}$, and $\alpha > 0$ is once again a fixed step size. For convenience, we use \term{soft ascent-descent} (or \term{SoftAD} for short) to refer to the algorithm implied by the iterative update (\ref{eqn:softAD_update}). Note that there is nothing particularly special about our choice of $\rho$ here; it is just a simple algebraic function whose derivative also takes a simple form; note that the function $\phi$ resulting from our choice of $\rho$ satisfies the desired properties of continuity, boundedness, monotonicity, and $\phi(0) = \rho^{\prime}(0) = 0$.\footnote{There are many other possible candidates, but this is typical ``smooth Huber function,'' also known in the literature as pseudo-Huber, Charbonnier, and L1-L2 \citep{barron2019a}.} Note that for each point being summed over, $\ell(w_{t};\rdv{Z}_{i}) > \theta$ implies descent while $\ell(w_{t};\rdv{Z}_{i}) < \theta$ implies ascent, and borderline points with $\ell(w_{t};\rdv{Z}_{i}) \approx \theta$ have a smaller relative impact. In contrast with Flooding, SoftAD requires that the training loss distribution induced by $(\ell(w;\rdv{Z}_{1}),\ldots,\ell(w;\rdv{Z}_{n}))$ be well-concentrated around $\theta$, where the degree of dispersion is measured in a symmetric fashion using $\rho$.
\begin{rmk}[Comparison with other variants of Flooding]\label{rmk:flood_variants}
During the review phase for this work, we were made aware of another recent related work by \citet{xie2022a}. Their proposed method is known as \term{iFlood}, and it is essentially a middle ground between our proposal above and the original Flooding procedure. They use pointwise thresholds as we do in SoftAD, but retain the hard ascent-descent switch as in Flooding. More concisely, replacing our soft truncator $\phi(x)$ in (\ref{eqn:softAD_update}) with the absolute value $\abs{x}$ yields the iFlood update. We have added empirical test results for iFlood to complement our original experiments at the end of \S{\ref{sec:empirical_benchmarks}}. Another very recent variant is \term{AdaFlood} \citep{bae2023a}, which sets the $\theta$ threshold level individually for each point based on ``difficulty'' as evaluated using an auxiliary model.
\end{rmk}
\begin{rmk}[Difference from OCE-like criteria]
At first glance, our objective $\rdv{S}_{n}(w;\theta)$ in (\ref{eqn:softad_obj}) may appear similar to the criteria used in OCE risk minimization \citep{lee2020a,li2021a} and some varieties of DRO \citep{duchi2021a}. Aside from the obvious difference that $\theta$ is fixed, rather than optimized alongside $w$, the critical difference here is that our $\rho(\cdot)$ is \emph{not} monotonic. Losses which are too large and too small are \emph{both} penalized. It is precisely this bi-directional property that allows for switching between ascent and descent; this is impossible to achieve with monotonic OCE and DRO risks \citep{holland2023bdd,holland2023survey,hu2023a,royset2024arxiv}. This bi-directional criterion can also be used as a straightforward method to provably avoid unintentional ``collapse'' into ERM solutions \citep{holland2024collapse}.
\end{rmk}

\subsection{Initial comparison with Flooding}\label{sec:method_compare_empirical}

To develop some intuition for our SoftAD (\ref{eqn:softAD_update}) and the Flooding update (\ref{eqn:flood_batch}), we carry out a few illustrative numerical experiments. To start, let us consider a simple, non-stochastic example in one dimension. Letting $f:\RR \to \RR$ be some differentiable function, we consider how the transformed gradient $\phi(f(x)-\theta)f^{\prime}(x)$ behaves under $\phi=\sign$ and $\phi=\rho^{\prime}$. In Figure \ref{fig:demo_quadratic}, we give a numerical example using a quadratic function. The softened nature of the transformed gradients used in SoftAD is clear when compared with the hard switching mechanism underlying the Flooding update. In Figure \ref{fig:demo_quadratic_GDvsAD}, we continue the quadratic function example, looking at sequences $(x_{1},x_{2},\ldots)$ generated based on the Flooding and SoftAD procedures. That is, instead of $n$ points from which to compute losses, we have just one ``loss,'' namely $f(x)=x^{2}/2$. Both procedures realize an effective ``flood level'' of sorts (i.e., a buffer around the loss minimizer), but as expected, the Flooding procedure tends to be far more ``jagged'' in its trajectory.

\begin{figure}[t]
\centering
\includegraphics[width=0.265\textwidth]{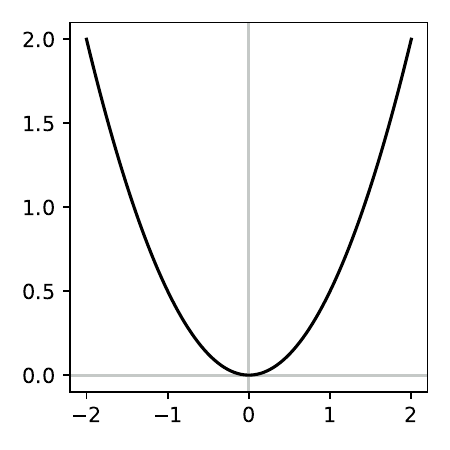}\includegraphics[width=0.36\textwidth]{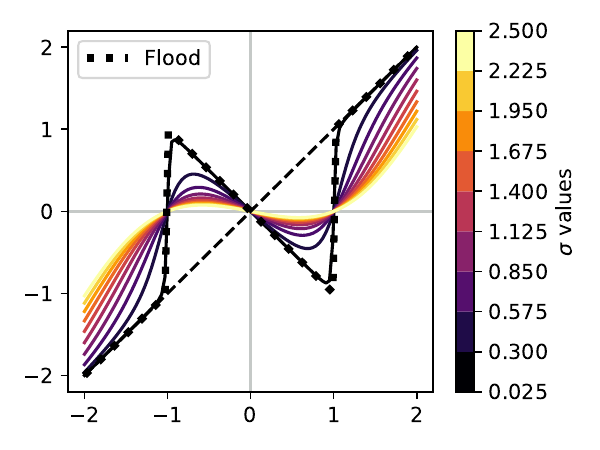}\includegraphics[width=0.36\textwidth]{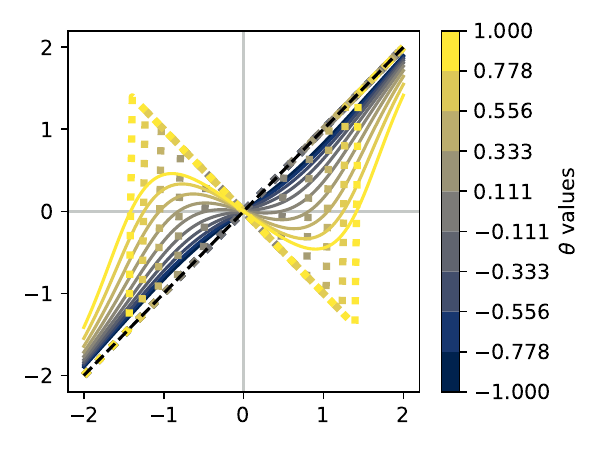}
\caption{The left-most figure simply plots the graph of $f(x) = x^{2}/2$ over $x \in [-2,2]$. The two remaining figures show plots of the graphs of $f^{\prime}(x) = x$ (dashed black line) and $\phi((f(x)-\theta)/\sigma)f^{\prime}(x)$ for the same range of $x$ values, with colors corresponding to modified values of $\sigma$ (middle plot; $\theta = 0.5$ fixed) and $\theta$ (right-most plot; $\sigma = 1.0$ fixed) respectively. Thick dotted lines are $\phi=\sign$, thin solid lines are $\phi=\rho^{\prime}$.}
\label{fig:demo_quadratic}
\end{figure}
\begin{figure}[t!]
\centering
\includegraphics[width=1.0\textwidth]{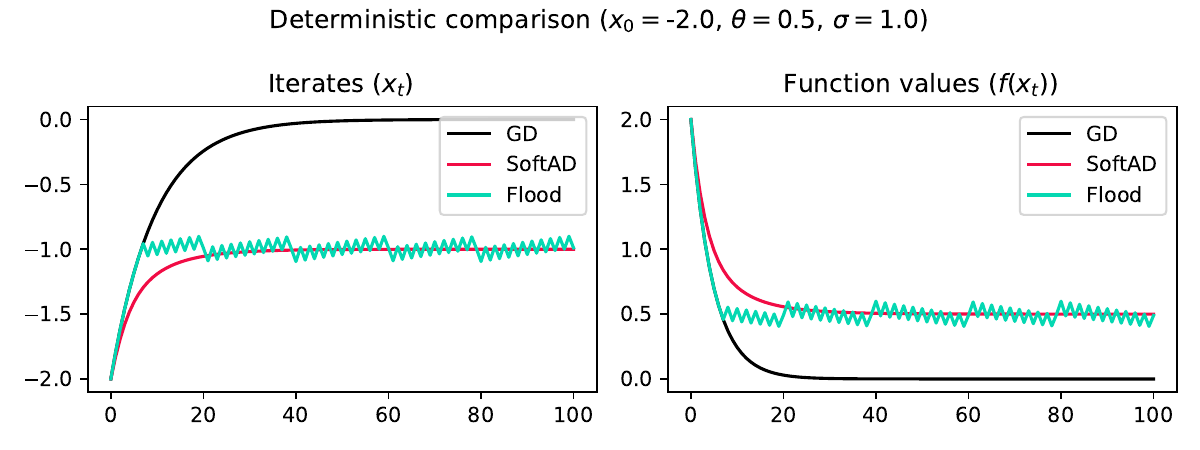}
\caption{Gradient descent on the quadratic example from Figure \ref{fig:demo_quadratic}. The horizontal axis denotes iteration number, and we plot sequences of iterates $(x_{t})$ and function values $(f(x_{t}))$ for each method. Here ``GD'' denotes vanilla gradient descent, with ``Flood'' and ``SoftAD'' corresponding to (\ref{eqn:flood_batch}) and (\ref{eqn:softAD_update}) respectively. Step size is $\alpha = 0.1$. }
\label{fig:demo_quadratic_GDvsAD}
\end{figure}

Finally, a simple example to illustrate how the per-point soft thresholding of SoftAD leads to distinct gradient-based update directions when compared to the Flooding strategy. Here we consider a dataset of $n$ data points $z_{1},\ldots,z_{n} \in \RR^{2}$, and use the squared Euclidean norm as a loss, i.e., $\ell(w;z) = \Abs{w-z}^{2}$. This is a natural extension of the quadratic example in the previous paragraph, to multiple points and two dimensions. In Figure \ref{fig:sims_2Dmean}, we look at two candidate points, and compute the Flooding and SoftAD update directions that arise at each candidate under a randomly generated dataset. We can clearly see how the Flooding update plunges directly towards the minimizer of $\rdv{R}_{n}$, unless it is too close (given threshold $\theta$), in which case it goes in the opposite direction. In contrast, the SoftAD update is composed of per-point update directions, some which attract toward the minimum, and some which repel from the minimum, with borderline points down-weighted (shorter update arrows). Since the final update averages over these, movement both toward and away from the minimum is clearly ``softened'' when compared with the Flooding updates.

\begin{figure}[t!]
\centering
\includegraphics[width=0.33\textwidth]{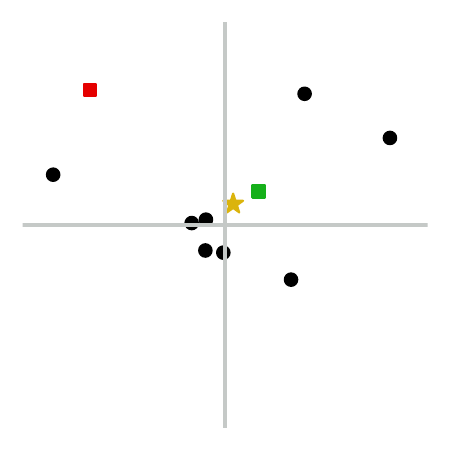}\includegraphics[width=0.33\textwidth]{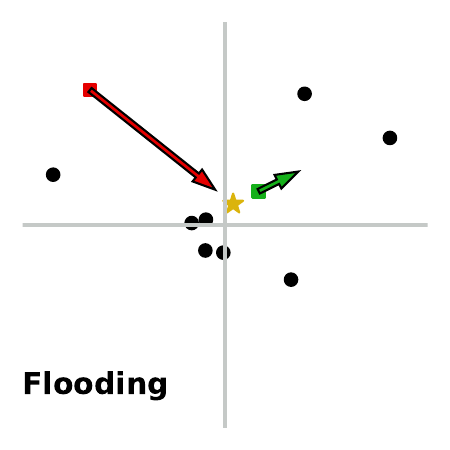}\includegraphics[width=0.33\textwidth]{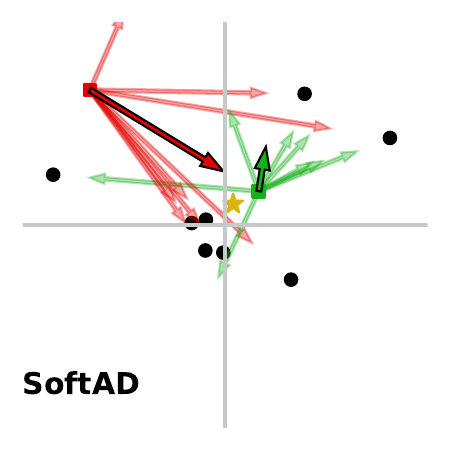}
\caption{\textit{Left:} We randomly sample $n=8$ points (black dots) from the 2D Gaussian distribution, zero mean, zero correlations, with standard deviation $2\sqrt{2}$ in each coordinate. The two candidates are denoted by square-shaped points (red and green), and the minimizer of $\rdv{R}_{n}$ is given by a gold star. \textit{Center:} The Flooding updates (colored arrows) via (\ref{eqn:flood_batch}) for each candidate. \textit{Right:} Analogous SoftAD update vectors via (\ref{eqn:softAD_update}), with per-point transformed gradients (semi-transparent arrows) for reference. Throughout, we have fixed $\theta = 1.5 \times \min_{w}\rdv{R}_{n}(w)$ and $\alpha = 0.75$.}
\label{fig:sims_2Dmean}
\end{figure}

\subsection{Comparison of convergence properties}\label{sec:theory}

With the Flooding and SoftAD methods in mind, next we consider concrete conditions under which stochastic gradient-based learning algorithms can be given guarantees of (approximate) stationarity. Throughout this section, we assume that the loss $w \mapsto \ell(w;z)$ is locally Lipschitz on $\WW$ for each $z \in \ZZ$, but convexity will not be used. Let us assume for simplicity that $(\rdv{Z}_{1},\rdv{Z}_{2},\ldots)$ is a sequence of independent random variables with distribution $\ddist$, the same as our test data point $\rdv{Z} \sim \ddist$.

Starting with SoftAD, we are interested in procedures fuelled by stochastic gradients of the form
\begin{align}\label{eqn:generic_gradient}
\rdv{G}_{t}(w) \defeq \phi(\ell(w;\rdv{Z}_{t})-\theta)\nabla\ell(w;\rdv{Z}_{t})
\end{align}
for all integers $t \geq 1$ and $w \in \WW$, with threshold $\theta \in \RR$ fixed in advance, and $\phi = \rho^{\prime}$ as before. Recalling the empirical SoftAD objective $\rdv{S}_{n}$ in (\ref{eqn:softad_obj}), the underlying population objective is
\begin{align}\label{eqn:new_obj_softad}
\mathrm{S}_{\ddist}(w) \defeq \theta + \exx_{\ddist}\rho(\ell(w;\rdv{Z})-\theta).
\end{align}
By design, we do not expect SoftAD to approach a stationary point of the original $\risk_{\ddist}$. Note that under mild assumptions on the data distribution, we have an unbiased estimator with $\exx_{\ddist}\rdv{G}_{t}(w) = \nabla\mathrm{S}_{\ddist}(w)$, suggesting stationarity in terms of $\mathrm{S}_{\ddist}(\cdot)$ as a natural goal. Assuming the losses are $L_{\ell}$-smooth in expectation, one can readily confirm that the objective (\ref{eqn:new_obj_softad}) is $L_{\textup{AD}}$-smooth, with
\begin{align}\label{eqn:smoothness_factor_softad}
L_{\textup{AD}} \defeq \exx_{\ddist}\left[ \sup_{w \in \WW} \Abs{\nabla\ell(w;\rdv{Z})}^{2} \right] + L_{\ell}.
\end{align}
A more detailed derivation is given in \S{\ref{sec:smoothness_details}}. Assuming that second-order moments are finite in a uniform sense over $\WW$ ensures that $L_{\textup{AD}} < \infty$, and naturally implies pointwise variance bounds, allowing us to seek out stationarity guarantees using a combination of gradient norm control and momentum in the fashion of \citet{cutkosky2021aNeurIPS}.
\begin{prop}[Stationarity for SoftAD, smooth case]\label{prop:stationarity_softad_smooth}
Starting with an arbitrary $w_{1} \in \WW$, update using $w_{t+1} = w_{t} - \alpha \rdv{M}_{t}/\Abs{\rdv{M}_{t}}$, where $\rdv{M}_{t} \defeq b\rdv{M}_{t-1} + (1-b)\overbar{\rdv{G}}_{t}(w_{t})$ for $t \geq 1$, with $\rdv{M}_{0} \defeq 0$ and $\overbar{\rdv{G}}_{t}(\cdot) \defeq \rdv{G}_{t}(\cdot)\min\{1,\gamma/\Abs{\rdv{G}_{t}(\cdot)}\}$, taking each gradient $\rdv{G}_{t}(\cdot)$ based on (\ref{eqn:generic_gradient}). Assuming we make $T-1$ updates, set the momentum parameter to $b = 1-1/\sqrt{T}$, the norm threshold to $\gamma = \sqrt{(L_{\textup{AD}}-L_{\ell})/(1-b)}$, and the step size to $\alpha = 1/T^{3/4}$. The stationarity of this sequence $(w_{1},w_{2},\ldots)$, assumed to be in $\WW$, in terms of the modified objective (\ref{eqn:new_obj_softad}) can be bounded by
\begin{align*}
\frac{1}{T} \sum_{t=1}^{T} \Abs{\nabla \mathrm{S}_{\ddist}(w_{t})} \leq \frac{1}{T^{1/4}}\left(\mathrm{S}_{\ddist}(w_{1})-\mathrm{S}_{\ddist}(w_{T+1}) + \frac{3L_{\textup{AD}}}{2} + 2\sqrt{L_{\textup{AD}}-L_{\ell}}\left(1+C_{\delta}\right) \right)
\end{align*}
using confidence-dependent factor $C_{\delta} \defeq 10\log(3T/\delta) + 4\sqrt{\log(3T/\delta)} + 1$,
with probability no less than $1-\delta$ over the draw of $(\rdv{Z}_{1},\rdv{Z}_{2},\ldots)$.
\end{prop}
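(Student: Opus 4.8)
The plan is to follow the normalized-momentum-with-clipping analysis of \citet{cutkosky2021aNeurIPS}, specialized to the bounded-second-moment setting that the SoftAD gradients (\ref{eqn:generic_gradient}) enjoy. The starting observations are that $\rdv{G}_{t}$ is an unbiased estimator of $\nabla\mathrm{S}_{\ddist}$ and that, because $\abs{\phi}\leq 1$, we have $\Abs{\rdv{G}_{t}(w)}^{2} \leq \sup_{w}\Abs{\nabla\ell(w;\rdv{Z}_{t})}^{2}$; taking expectations and recalling (\ref{eqn:smoothness_factor_softad}) yields a uniform second-moment bound $\exx_{\ddist}\Abs{\rdv{G}_{t}(w)}^{2} \leq \sigma^{2}$ with $\sigma^{2} \defeq L_{\textup{AD}}-L_{\ell}$. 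This $\sigma^{2}$ is the only distributional quantity that will appear, and it is exactly what the clipping threshold $\gamma = \sigma/\sqrt{1-b}$ is tuned against.

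First I would establish a descent inequality. Writing $g_{t} \defeq \nabla\mathrm{S}_{\ddist}(w_{t})$ and $\epsilon_{t} \defeq \rdv{M}_{t} - g_{t}$, the update moves a fixed distance $\Abs{w_{t+1}-w_{t}} = \alpha$ in the direction $-\rdv{M}_{t}/\Abs{\rdv{M}_{t}}$, so $L_{\textup{AD}}$-smoothness of (\ref{eqn:new_obj_softad}) together with the elementary bound $\langle g_{t}, \rdv{M}_{t}/\Abs{\rdv{M}_{t}}\rangle \geq \Abs{g_{t}} - 2\Abs{\epsilon_{t}}$ gives
\begin{align*}
\mathrm{S}_{\ddist}(w_{t+1}) \leq \mathrm{S}_{\ddist}(w_{t}) - \alpha\Abs{g_{t}} + 2\alpha\Abs{\epsilon_{t}} + \frac{L_{\textup{AD}}\alpha^{2}}{2}.
\end{align*}
Telescoping over $t = 1,\ldots,T$ and dividing by $\alpha T$ reduces the whole problem to controlling the averaged momentum error $(1/T)\sum_{t}\Abs{\epsilon_{t}}$, since $\alpha T = T^{1/4}$ already produces the leading $T^{-1/4}$ rate for the optimization-gap and $L_{\textup{AD}}\alpha/2$ terms.

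Next I would analyze the error recursion. Unrolling $\rdv{M}_{t} = b\rdv{M}_{t-1} + (1-b)\overbar{\rdv{G}}_{t}(w_{t})$ and subtracting $g_{t}$ gives $\epsilon_{t} = b\epsilon_{t-1} + (1-b)(\overbar{\rdv{G}}_{t} - g_{t}) - b(g_{t} - g_{t-1})$, which I would split into a martingale (noise) part $\overbar{\rdv{G}}_{t} - \exx_{t-1}\overbar{\rdv{G}}_{t}$, a clipping-bias part $\exx_{t-1}\overbar{\rdv{G}}_{t} - g_{t}$, and a smoothness-drift part $g_{t} - g_{t-1}$ (the decaying $b^{t}\epsilon_{0}$ transient is lower order, $O(\sigma/\sqrt{T})$, and absorbed). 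The two deterministic parts are controlled pathwise: the clipping bias satisfies $\Abs{\exx_{t-1}(\overbar{\rdv{G}}_{t} - \rdv{G}_{t})} \leq \exx_{t-1}\Abs{\rdv{G}_{t}}^{2}/\gamma \leq \sigma^{2}/\gamma = \sigma\sqrt{1-b}$, while $L_{\textup{AD}}$-smoothness gives $\Abs{g_{t} - g_{t-1}} \leq L_{\textup{AD}}\alpha$; after the geometric $(1-b)\sum_{s} b^{t-s}$ weighting these contribute $O(\sigma\sqrt{1-b}) = O(\sigma T^{-1/4})$ and $O(L_{\textup{AD}}\alpha/(1-b)) = O(L_{\textup{AD}}T^{-1/4})$ respectively, matching the $\tfrac{3}{2}L_{\textup{AD}}$ and the ``$1$'' inside $(1+C_{\delta})$.

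The hard part will be the high-probability control of the martingale (noise) part $(1-b)\sum_{s\leq t} b^{t-s}(\overbar{\rdv{G}}_{s} - \exx_{s-1}\overbar{\rdv{G}}_{s})$. Clipping to $\gamma$ bounds each increment by $2\gamma$ while its conditional second moment stays at most $\sigma^{2}$, so a Freedman/Bernstein-type inequality for martingale differences applies, with a union bound over the $T$ iterates supplying the $\log(3T/\delta)$. The characteristic two-term shape of $C_{\delta} = 10\log(3T/\delta) + 4\sqrt{\log(3T/\delta)} + 1$ is exactly what such an inequality produces: the $\log$ term comes from the bounded-increment (sub-exponential) regime governed by $\gamma$, and the $\sqrt{\log}$ term from the variance (sub-Gaussian) regime governed by $\sigma^{2}$. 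Substituting $1-b = T^{-1/2}$, $\gamma = \sigma T^{1/4}$, and $\alpha = T^{-3/4}$ collapses all noise contributions to order $\sigma T^{-1/4}C_{\delta}$, and assembling with the descent inequality recovers the stated bound. The one point demanding genuine care is that the concentration must hold simultaneously for every $t$ rather than only at $t=T$, which is why the failure probability is split across the noise, bias, and drift accounting to produce $3T/\delta$ rather than $T/\delta$.
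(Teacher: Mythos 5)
Your proposal is correct and follows essentially the same route as the paper: the paper's own proof merely verifies the three hypotheses (unbiasedness of $\rdv{G}_{t}$ for $\nabla\mathrm{S}_{\ddist}$, $L_{\textup{AD}}$-smoothness of $\mathrm{S}_{\ddist}$, and the uniform second-moment bound $L_{\textup{AD}}-L_{\ell}$) and then invokes Theorem 2 of \citet{cutkosky2021aNeurIPS} as a black box, plugging in the stated $\alpha$, $b$, $\gamma$ and simplifying. What you have written is a reconstruction of the internal proof of that cited theorem (normalized-momentum descent inequality, error recursion split into martingale, clipping-bias, and drift parts, Freedman-type concentration with a union bound over iterates), which is the same mathematics the citation encapsulates, so the two arguments coincide in substance.
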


For comparison, we next consider stationarity of the Flooding algorithm in the same context of smooth losses. The argument used in Proposition \ref{prop:stationarity_softad_smooth} critically depends on the smoothness of the underlying objective (\ref{eqn:new_obj_softad}). Unfortunately, this smoothness cannot be leveraged when we consider the population objective underlying the Flooding procedure, namely the function
\begin{align}\label{eqn:ideal_obj_flood}
w \mapsto \theta + \abs{\risk_{\ddist}(w)-\theta}.
\end{align}
Further complicating things is the fact that stochastic gradients of the form (\ref{eqn:generic_gradient}) with $\phi(\cdot)$ replaced by $\sign(\cdot)$ do not yield unbiased sub-gradient estimates for (\ref{eqn:ideal_obj_flood}), but rather for an upper bound $\theta + \exx_{\ddist}\abs{\ell(w;\rdv{Z})-\theta}$ that follows via Jensen's inequality. A lack of smoothness means we cannot establish stationarity in terms of (\ref{eqn:ideal_obj_flood}) nor the upper bound just given, but it is possible using a smoothed approximation (the Moreau envelope) of this bound:
\begin{align}\label{eqn:new_obj_flood}
\overbar{\mathrm{F}}_{\ddist}(w) \defeq \inf_{v \in \WW}\left[ \theta + \exx_{\ddist}\abs{\ell(v;\rdv{Z})-\theta} + \frac{1}{2\beta}\Abs{v-w}^{2} \right].
\end{align}
The parameter $\beta > 0$ controls the degree of smoothness. The objective in (\ref{eqn:new_obj_flood}) can be linked to ``gradients'' in (\ref{eqn:generic_gradient}) with $\phi = \sign$, and leveraging the Lipschitz continuity of $\abs{\cdot}$ along with a sufficiently smooth loss, it is possible to show that this non-smooth objective satisfies a weak form of convexity, and using the techniques of \citet{davis2019b} it is possible to show that stochastic gradient algorithms enjoy stationarity guarantees, albeit not in terms of the objective (\ref{eqn:ideal_obj_flood}), but rather the smoothed upper bound (\ref{eqn:new_obj_flood}).
\begin{prop}[Stationarity for Flooding, smooth case]\label{prop:stationarity_flood_smooth}
Letting $\WW$ be closed and convex, take an initial point $w_{1} \in \WW$, and make $T-1$ updates using $w_{t+1} = \Pi_{\WW}[w_{t} - \alpha\rdv{G}_{t}(w_{t})]$, where $\Pi_{\WW}[\cdot]$ denotes projection to $\WW$, and each $\rdv{G}_{t}(\cdot)$ is computed using (\ref{eqn:generic_gradient}) with $\phi = \sign$. Assuming the loss $w \mapsto \ell(w;z)$ is $L_{\ell}^{\ast}$-smooth on $\WW$ for all $z \in \ZZ$, and taking a step size of
\begin{align*}
\alpha^{2} = \frac{\Delta}{TL_{\ell}^{\ast}(L_{\textup{AD}}-L_{\ell})}, \text{ using $\Delta$ such that } \Delta \geq \overbar{\mathrm{F}}_{\ddist}(w_{1})-\inf_{w \in \WW}\overbar{\mathrm{F}}_{\ddist}(w),
\end{align*}
with $L_{\textup{AD}}$ and $L_{\ell}$ as in (\ref{eqn:smoothness_factor_softad}), the expected squared stationarity in terms of the smoothed upper bound (\ref{eqn:new_obj_flood}) at smoothness level $\beta = 1/(2L_{\ell}^{\ast})$ can be controlled as
\begin{align*}
\frac{1}{T}\sum_{t=1}^{T}\exx\Abs{\nabla\overbar{\mathrm{F}}_{\ddist}(w_{t})}^{2} \leq \sqrt{\frac{2L_{\ell}^{\ast}(L_{\textup{AD}}-L_{\ell})\Delta}{T}}
\end{align*}
with expectation taken over the draw of $(\rdv{Z}_{1},\rdv{Z}_{2},\ldots)$.
\end{prop}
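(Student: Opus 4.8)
The plan is to recognize this as an instance of the stochastic subgradient method run on a \emph{weakly convex} objective, and to push the argument of \citet{davis2019b} through a one-step descent inequality on the Moreau envelope $\overbar{\mathrm{F}}_{\ddist}$. Write $h(w) \defeq \theta + \exx_{\ddist}\abs{\ell(w;\rdv{Z})-\theta}$ for the non-smooth upper bound whose $\beta$-envelope is exactly $\overbar{\mathrm{F}}_{\ddist}$ in (\ref{eqn:new_obj_flood}). First I would establish that $h$ is $L_{\ell}^{\ast}$-weakly convex. For each fixed $z$, the map $w \mapsto \abs{\ell(w;z)-\theta}$ is the composition of the $1$-Lipschitz convex function $\abs{\cdot-\theta}$ with the $L_{\ell}^{\ast}$-smooth map $w \mapsto \ell(w;z)$; the standard calculus of weak convexity shows such a composition is $L_{\ell}^{\ast}$-weakly convex, i.e.\ $w \mapsto \abs{\ell(w;z)-\theta} + (L_{\ell}^{\ast}/2)\Abs{w}^{2}$ is convex, and taking the expectation over $\rdv{Z}$ preserves this. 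Since $\beta = 1/(2L_{\ell}^{\ast}) < 1/L_{\ell}^{\ast}$, the envelope $\overbar{\mathrm{F}}_{\ddist}$ is then $C^{1}$ with $\nabla\overbar{\mathrm{F}}_{\ddist}(w) = \beta^{-1}(w-\widehat{w})$, where $\widehat{w} \defeq \argmin_{v\in\WW}[\,h(v) + (2\beta)^{-1}\Abs{v-w}^{2}\,]$ is the (unique) proximal point, well-defined because $\WW$ is closed and convex.

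Next I would record two facts about the stochastic gradient $\rdv{G}_{t}$ with $\phi = \sign$. As already observed in the text, $\exx_{t}\rdv{G}_{t}(w) \in \partial h(w)$ is a genuine subgradient of $h$, obtained by differentiating inside the expectation with the selection $\sign(\ell-\theta)\nabla\ell$. Moreover the second moment is controlled pointwise: since $\sign(\cdot)^{2}\leq 1$, we have $\exx_{t}\Abs{\rdv{G}_{t}(w)}^{2} \leq \exx_{\ddist}\Abs{\nabla\ell(w;\rdv{Z})}^{2} \leq \exx_{\ddist}[\sup_{w}\Abs{\nabla\ell(w;\rdv{Z})}^{2}] = L_{\textup{AD}}-L_{\ell}$, using the definition (\ref{eqn:smoothness_factor_softad}). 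This uniform second-order bound plays the role of the noise proxy $G^{2}$ in the weakly convex analysis.

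The core step is a descent inequality for the envelope along one update $w_{t+1} = \Pi_{\WW}[w_{t}-\alpha\rdv{G}_{t}]$. Using that $\widehat{w}_{t}$ is feasible (though not optimal) for the inf defining the envelope at $w_{t+1}$, bound $\overbar{\mathrm{F}}_{\ddist}(w_{t+1}) \leq h(\widehat{w}_{t}) + (2\beta)^{-1}\Abs{w_{t+1}-\widehat{w}_{t}}^{2}$; then expand $\Abs{w_{t+1}-\widehat{w}_{t}}^{2}$ via nonexpansiveness of $\Pi_{\WW}$ (valid since $\widehat{w}_{t}\in\WW$), take $\exx_{t}$, and substitute the weak-convexity inequality $\langle \exx_{t}\rdv{G}_{t}, w_{t}-\widehat{w}_{t}\rangle \geq h(w_{t}) - h(\widehat{w}_{t}) - (L_{\ell}^{\ast}/2)\Abs{w_{t}-\widehat{w}_{t}}^{2}$ together with the second-moment bound. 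Combining with $h(w_{t})-h(\widehat{w}_{t}) \geq (2\beta)^{-1}\Abs{w_{t}-\widehat{w}_{t}}^{2}$ and the calibration $\beta = 1/(2L_{\ell}^{\ast})$, the $\Abs{w_{t}-\widehat{w}_{t}}^{2}$ terms collapse to a negative multiple of $\Abs{\nabla\overbar{\mathrm{F}}_{\ddist}(w_{t})}^{2} = \beta^{-2}\Abs{w_{t}-\widehat{w}_{t}}^{2}$, giving
\[
\exx_{t}\overbar{\mathrm{F}}_{\ddist}(w_{t+1}) \leq \overbar{\mathrm{F}}_{\ddist}(w_{t}) - c\,\alpha\Abs{\nabla\overbar{\mathrm{F}}_{\ddist}(w_{t})}^{2} + c'\,\alpha^{2}L_{\ell}^{\ast}(L_{\textup{AD}}-L_{\ell})
\]
for absolute constants $c,c'>0$. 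Telescoping over $t=1,\ldots,T$, taking full expectations, applying $\overbar{\mathrm{F}}_{\ddist}(w_{1}) - \inf_{w}\overbar{\mathrm{F}}_{\ddist}(w) \leq \Delta$, and dividing by $c\alpha T$ yields a bound of the form $\Delta/(c\alpha T) + (c'/c)\,\alpha L_{\ell}^{\ast}(L_{\textup{AD}}-L_{\ell})$; the prescribed step size $\alpha^{2} = \Delta/(TL_{\ell}^{\ast}(L_{\textup{AD}}-L_{\ell}))$ is precisely the choice balancing the two terms, which produces the advertised $\sqrt{\cdot/T}$ rate (the leading numerical constant following from the exact form of the descent lemma).

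The main obstacle is the weak-convexity bookkeeping around the Moreau envelope: justifying that the sign-selected stochastic gradient has expectation lying in $\partial h$ (an honest subgradient of the \emph{expectation}, not merely a pointwise object), and then tracking constants so that the $-(2\beta)^{-1}\Abs{w_{t}-\widehat{w}_{t}}^{2}$ descent contribution strictly dominates the $+(L_{\ell}^{\ast}/2)$ weak-convexity defect, leaving a genuinely negative coefficient on $\Abs{\nabla\overbar{\mathrm{F}}_{\ddist}(w_{t})}^{2}$. This is exactly where the condition $\beta < 1/L_{\ell}^{\ast}$ is indispensable, and where the per-$z$ smoothness constant $L_{\ell}^{\ast}$ (as opposed to the in-expectation $L_{\ell}$ of (\ref{eqn:smoothness_factor_softad})) enters, in contrast to the smooth-objective route used for Proposition \ref{prop:stationarity_softad_smooth}.
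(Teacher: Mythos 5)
Your proposal follows essentially the same route as the paper: the paper also establishes $L_{\ell}^{\ast}$-weak convexity of $w \mapsto \theta + \exx_{\ddist}\abs{\ell(w;\rdv{Z})-\theta}$ via the composite structure (1-Lipschitz convex outer $\abs{\cdot-\theta}$, $L_{\ell}^{\ast}$-smooth inner $\ell$), verifies the unbiased-subgradient property and the second-moment bound $L_{\textup{AD}}-L_{\ell}$, and then invokes Theorem 3 of \citet{davis2019b} directly rather than re-deriving its one-step Moreau-envelope descent inequality as you do. The ingredients and their roles are identical, so this is the same argument with the key lemma unfolded instead of cited.
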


\begin{rmk}[Comparing rates and assumptions]
Considering the preceding Propositions \ref{prop:stationarity_softad_smooth} and \ref{prop:stationarity_flood_smooth}, one common point is that learning algorithms based on both the SoftAD and Flooding gradients (of mini-batch size $1$) can be shown to be approximately stationary in terms of functions of a similar form (i.e., (\ref{eqn:new_obj_softad}) and (\ref{eqn:ideal_obj_flood})), differing only in how they measure deviations from the threshold $\theta$. The rates of decrease (as a function of $T$) are essentially the same, noting that the bounds in Proposition \ref{prop:stationarity_flood_smooth} are in terms of \emph{squared} norms. That said, a lack of smoothness means the Flooding guarantees only hold for a smoothed variant, plus they require a stronger form of smoothness in the loss (over all $z$ vs.~in expectation). In addition, the SoftAD guarantees hold with high probability over the data sample, and can be readily strengthened to hold for an individual iterate (instead of summing over $T$ iterates), using for example the technique of \citet[Thm.~3]{cutkosky2021aNeurIPS}.
\end{rmk}

\section{Empirical study}\label{sec:empirical}

In this section, we apply the proposed SoftAD procedure to a variety of classification tasks using neural network models, leading to losses that are non-convex and non-smooth. Our goal here is to compare and contrast the behavior and performance (accuracy, average loss, model norm) of SoftAD with three natural alternatives: ERM, Flooding, and SAM.\footnote{To re-create all of the numerical test results and figures from this paper, source code and Jupyter notebooks are available at a public GitHub repository: \url{https://github.com/feedbackward/bdd-flood}.}

\subsection{Overview of experiments}\label{sec:empirical_overview}

Our core experiments are centered around re-creating the tests done by \citet[\S{4.1}, \S{4.2}]{ishida2020a} and \citet[\S{3.1}]{foret2021a} to include all four methods of interest. There are two main parts: simulation-based tests and real benchmark-based tests. We briefly describe the setup of each below.

\paragraph{Non-linear binary classification on the plane}
We use three synthetic data-generators (``two Gaussians,'' ``sinusoid,'' and ``spiral,'' see Figure \ref{fig:synthetic}) to create a dataset on the 2D plane that is not linearly separable, but separable using relatively simple non-linear models. We treat the underlying model as unknown, and approximate it using a shallow feedforward neural network. All four methods of interest (ERM, Flooding, SAM, and SoftAD) are driven by the Adam optimizer, with the cross-entropy loss used as the base loss function. Complete experimental details are provided in \S{\ref{sec:empirical_synthetic}}.

\paragraph{Image classification from scratch}
Our second set of experiments utilizes four well-known benchmark datasets for multi-class image classification. Compared to the synthetic experiments, the classification task is more difficult (much larger inputs, variation within classes, more classes), and so we utilize more sophisticated neural network models to tackle the classification task. That said, as the sub-section title indicates, this training is done ``from scratch,'' i.e., no pre-trained models are used. The datasets we use are all standard benchmarks in the machine learning community: CIFAR-10, CIFAR-100, FashionMNIST, and SVHN. Model choice essentially mirrors that of \citet[\S{4.2}]{ishida2020a}. For FashionMNIST, we flatten each image into a vector, and use a simple feedforward neural network with one hidden layer. For SVHN, we use ResNet-18 as implemented in \texttt{torchvision.models}, without any pre-trained weights. Finally, for both CIFAR-10 and CIFAR-100, we use ResNet-34 (again in \texttt{torchvision.models}) without pre-training. For the optimizer, we use vanilla SGD with a fixed step size. Full details are given in \S{\ref{sec:empirical_benchmarks}} in the appendix.

\subsection{Main findings}\label{sec:main_findings}

\paragraph{Uniformly small loss generalization gap}
One of the most lucid results we obtained is that SoftAD shows the smallest \emph{loss} generalization gap of all the methods studied, across all models and datasets used. In Table \ref{table:loss_gen_gap}, we show the gaps incurred under each dataset. More precisely, for each trial and each epoch, we compute the average cross-entropy loss on test and training (less validation) datasets, and then respective average both of these over all trials. The difference of these two values (i.e., trial-averaged test loss minus trial-averaged training loss) after the final epoch of training is the value shown in each cell of the table.

\begin{table}[t!]
\begin{center}
\begin{tabular}{|c||c|c|c|c|c|c|c|}
\hline
 & Gaussian & Sinusoid & Spiral & CIFAR-10 & CIFAR-100 & Fashion & SVHN \\
\hline\hline
ERM & 0.080 & 0.150 & 0.297 & 3.265 & 7.603 & 0.801 & 0.762\\
\hline
Flood & 0.011 & 0.058 & 0.119 & 1.239 & 3.114 & 0.436 & 0.436\\
\hline
SAM & 0.024 & 0.096 & 0.154 & 1.512 & 3.672 & 0.639 & 0.493\\
\hline
SoftAD & \textbf{0.004} & \textbf{0.016} & \textbf{0.087} & \textbf{1.168} & \textbf{2.701} & \textbf{0.422} & \textbf{0.362}\\
\hline
\end{tabular}
\end{center}
\caption{Generalization gap (test - training) for trial-averaged cross entropy loss after final epoch.}
\label{table:loss_gen_gap}
\end{table}

\paragraph{Balance of accuracy and loss on real data}
In the previous paragraph we noted that SoftAD has superior loss gaps, but this is not much to celebrate if performance in terms of the key metrics of interest (i.e., test loss and test accuracy) is poor. In Figures \ref{fig:benchmarks_1}--\ref{fig:benchmarks_2}, we show the trajectory of loss and accuracy (for both training and test data) over epochs run (averaged over trials). All four methods are comparable in terms of accuracy, with SAM (at double the gradient cost) coming out slightly ahead. On the other hand, there is significant divergence between the different methods in terms of test loss. For each dataset, SoftAD achieves a superior test loss, often converging faster than any of the other methods; this may be a natural by-product of the fact that SoftAD is designed to ensure losses are \emph{well-concentrated} around threshold $\theta$, instead of just asking that their mean get close to $\theta$ (as in Flooding). While there is clearly a major difference between ERM and the other three methods, the stable nature of the ``double descent'' in SoftAD is quite stark compared with Flooding and SAM.

\begin{figure}[t]
\centering
\includegraphics[width=0.45\textwidth]{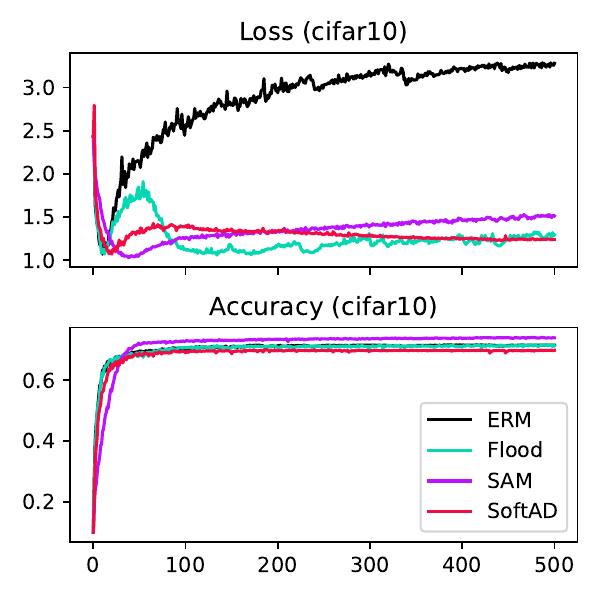}\includegraphics[width=0.45\textwidth]{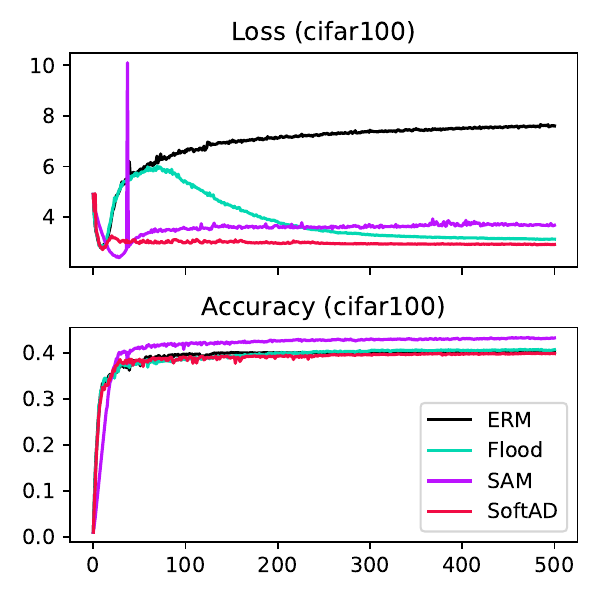}
\caption{Trajectories over epochs for average test loss (top row) and test accuracy (bottom row). Horizontal axis is epoch number. Columns are associated with the CIFAR-10 and CIFAR-100 datasets (left to right).}
\label{fig:benchmarks_1}
\end{figure}
\begin{figure}[t]
\centering
\includegraphics[width=0.45\textwidth]{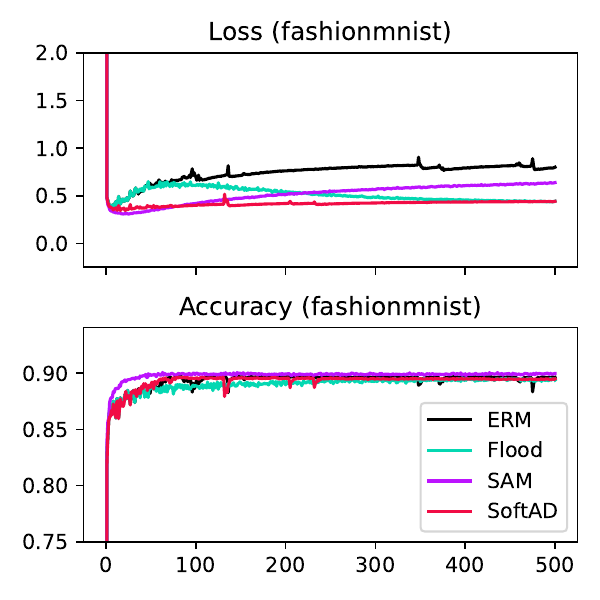}\includegraphics[width=0.45\textwidth]{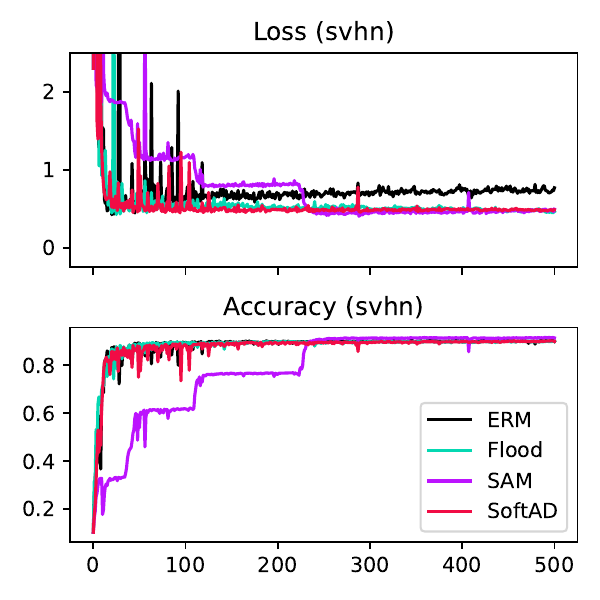}
\caption{Analogous to Figure \ref{fig:benchmarks_1}, but with FashionMNIST and SVHN datasets.}
\label{fig:benchmarks_2}
\end{figure}

\paragraph{Uniformly smaller model norms}
We do not do any explicit model regularization (e.g., L2 norm penalization) in our experiments here, and we only use fixed step-size parameters for Adam and SGD, so as we run for many iterations, the norm of the model weight parameters tends to grow. While this property holds across all methods tested here, we find that under all datasets and models tested, SoftAD uniformly results in the smallest model norm; see Figure \ref{fig:benchmarks_norm} for trajectories over epochs for each benchmark dataset. The ``Model norm'' values plotted here are the L2 norm of all the model parameters (neural network weights) concatenated into a single vector, and these norm values are averaged over trials. Completely analogous trends hold for the simulated datasets as well.

\begin{figure}[t]
\centering
\includegraphics[width=0.45\textwidth]{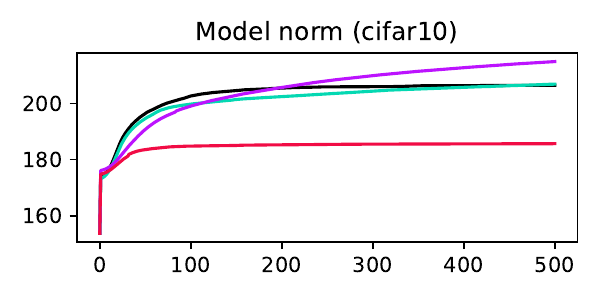}\includegraphics[width=0.45\textwidth]{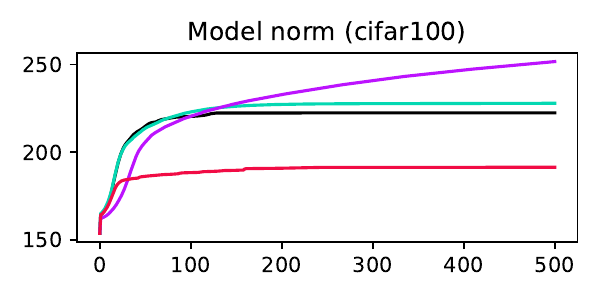}\\\includegraphics[width=0.45\textwidth]{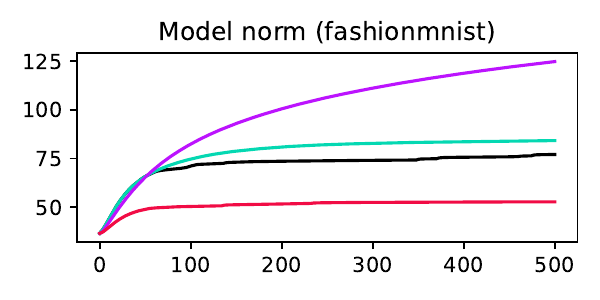}\includegraphics[width=0.45\textwidth]{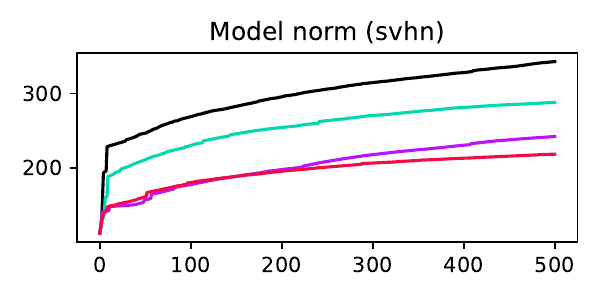}
\caption{Model norm trajectories over epochs for each dataset in Figures \ref{fig:benchmarks_1}--\ref{fig:benchmarks_2}.}
\label{fig:benchmarks_norm}
\end{figure}

\paragraph{Trends in hyperparameter selection}
Aside from ERM, the three key methods of interest (Flooding, SAM, SoftAD) each have one hyperparameter. Flooding and SoftAD have the threshold $\theta$ as described in \S{\ref{sec:bg}}--\S{\ref{sec:method}}, and SAM has the radius parameter (denoted by ``$\rho$'' in the original paper). In all tests, we select a representative candidate for each method with hyperparameters by using validation data held out from the training data, and in Table \ref{table:hyperparameters} we show the average and standard deviation of the validation-based hyperparameters over randomized trials (see \S{\ref{sec:empirical_appendix}} for exact hyperparameter grid values). One clear take-away from this table is that the ``best'' value of $\theta$ (in terms of accuracy) for SoftAD tends to be \emph{larger} than that for Flooding, and this trend is uniform across all datasets, both simulated and real. In particular for the real benchmark datasets, it is interesting to note that while a larger threshold $\theta$ (applied to \emph{loss} distribution) is selected for SoftAD, the resulting test loss value achieved is actually smaller/better than that achieved by Flooding (top row of Figures \ref{fig:benchmarks_1}-\ref{fig:benchmarks_2}).

\begin{table}[t!]
\begin{center}
\begin{tabular}{|c||c|c|c|c|c|c|c|}
\hline
 & Gaussian & Sinusoid & Spiral & CIFAR-10 & CIFAR-100 & Fashion & SVHN \\
\hline\hline
Flood & 0.05 {\tiny (0.06)} & 0.05 {\tiny (0.06)} & 0.05 {\tiny (0.06)} & 0.05 {\tiny (0.06)} & 0.01 {\tiny (0)} & 0.01 {\tiny (0)} & 0.03 {\tiny (0.05)}\\
\hline
SAM & 0.36 {\tiny (0.19)} & 0.05 {\tiny (0.06)} & 0.05 {\tiny (0.06)} & 0.36 {\tiny (0.19)} & 0.5 {\tiny (0)} & 0.32 {\tiny (0.16)} & 0.28 {\tiny (0.20)}\\
\hline
SoftAD & 0.08 {\tiny (0.08)} & 0.05 {\tiny (0.06)} & 0.05 {\tiny (0.06)} & 0.08 {\tiny (0.08)} & 0.22 {\tiny (0.14)} & 0.03 {\tiny (0.04)} & 0.13 {\tiny (0.10)}\\
\hline
\end{tabular}
\end{center}
\caption{Hyperparameters selected by validation for each method (averaged over trials). Flooding and SoftAD have threshold $\theta$; SAM has radius parameter. Standard deviation (over trials) is given in small-text parentheses.}
\label{table:hyperparameters}
\end{table}

\section{Limitations and concluding remarks}\label{sec:conclusion}

While previous work had already shown that it is possible to sacrifice performance in terms of losses to improve accuracy, the nature of that tradeoff was left totally unexplored, and in \S{\ref{sec:intro}} we put forward the hypothesis that simply asking the empirical loss mean to get close to a non-zero threshold $\theta$, as in Flooding, would not be enough to realize a competitive tradeoff over varied learning tasks (datasets, models). Our main take-away is that we have empirical evidence that the slightly stronger requirement of ``\emph{losses well-concentrated around $\theta$}'' (implemented as SoftAD) can result in an appealing balance of average test loss and accuracy, with the added benefit of a strong (implicit) regularization effect, likely due to the soft dampening effect on borderline points. A more formal theoretical understanding of this regularization effect is of interest, as are empirical studies going far beyond the limited choice of loss functions used here. Our biggest limitation is that the question of ``how to set the threshold $\theta$?'' still remains without an answer. Any meaningful answer will likely require some user judgement regarding tradeoffs between performance metrics. One potential first approach would be to leverage recent techniques for estimating the Bayes error \citep{ishida2023a}, combined with existing surrogate theory \citep{bartlett2006b} to reverse-engineer a loss threshold given a user-specified ``tolerable'' drop in accuracy, for example.

\section*{Acknowledgments}
This work was supported by JST PRESTO (grant number JPMJPR21C6) and a grant from the SECOM Science and Technology Foundation.

\clearpage

\appendix

\section{Bibliographic notes}

In this section, we provide additional references intended to complement those in the main body of the paper.

\subsection{Broad overview}

The following questions succinctly summarize key aspects of the problem of generalization:\footnote{These questions are inspired by the concise and lucid problem setting of \citet{jia2020a}.}
\begin{enumerate}
\item[\QP{.}] What properties at training time are reliable indicators of performance at test time?
\item[\QA{.}] How can we efficiently find candidates with such desirable properties?
\end{enumerate}
The easy answer to these questions is, of course, ``it depends.'' There is no fixed procedure that can guarantee arbitrarily good performance on all statistical learning problems, even if restricted to binary classification tasks.\footnote{A lucid explanation is given by \citet[Ch.~1]{devroye1996ProbPR}.} A more subtle answer involves characterizing the problems on which abstract learning algorithms such as empirical risk minimization (ERM) yield tight bounds on tractable criteria of interest (e.g., the expected loss).\footnote{Even this refined problem is far from trivial; see \citet{shalev2010a} and \citet{feldman2016a}.} Even more difficult is refining our understanding of the learning problems on which concrete algorithms used in practice can be reliably expected to perform well.\footnote{Even the critical question of which core optimizer to use does not have a clear-cut answer on standard benchmark datasets \citep{schmidt2021a}. With additional options such dropout, batch normalization, and all manners of data augmentation, it is not surprising that the practitioner often takes a trial-and-error approach.}

For conceptual grounding, we make use of the two questions, \QP{} (the ``property'' question) and \QA{} (the ``algorithm'' question), particularly within the context of non-linear models such as neural networks. Broadly speaking, in the machine learning literature over the past three decades, most answers to the property question \QP{} come in the form of quantifying some notion of ``simplicity,'' a property of candidate $w$. It goes without saying that the underlying heuristic is that all else equal (at training time), a ``complex'' candidate seems intuitively less likely to perform well at test time.\footnote{In a sense this is just human nature, not specific to machine learning \citep{SEP-simplicity}.} As for the algorithm question, there are numerous ``workhorse'' procedures of machine learning that are computationally convenient, have high-quality software available, and tend to generalize very well in practice, providing a partial answer to \QA{}. That said, the design principles underlying these procedures are often only very loosely related to the properties that satisfy \QP{}.\footnote{In the context of training machine learning models both big and small, ``Goodhart's law'' suggests that this gap is to some extent probably a good thing (\url{https://openai.com/research/measuring-goodharts-law}).} With this in mind, a large body of research can be understood as trying to develop new connections between answers to \QP{} and \QA{}, either through \textit{post hoc} analysis using existing concepts, or by introducing new properties and deriving algorithms in a more unified fashion.

\subsection{Notions of model complexity}

Model complexity is a concept that has a long history in the context of statistical model selection \citep{claeskens2008ModelSelection}, with well-established ties to information theory \citep{kullback1968InfoStats}. Assuming the quality of ``fit'' is measured using negative log-likelihood, the second derivative of this objective function (Hessian matrix in multi-dimensional case) is known as the Fisher information (matrix).\footnote{Since the data is random, so is the Fisher information; some authors call this the \term{observed} Fisher information, in contrast with the \term{expected} Fisher information \citep{efron1978a}.} In the context of neural networks, it is common to use their outputs to model probabilities \citep{denker1990a}, and thus at least conceptually, much of the existing statistical methodology can be inherited. For early work in the context of backprop-driven neural networks, \citet{mackay1992a} looks at designing objective criteria for comparing and choosing between models (including norm regularization parameters). MacKay introduces a form of Bayesian ``evidence'' for candidate models using a Gaussian approximation that requires evaluating the (inverse) Hessian of the base objective function. More generally, the Hessian describes the curvature of the objective function, and is closely related to geometric notions of ``flat regions'' on the surface induced by the objective function \citep{goodfellow2014a,li2018a}.

The notion of model complexity has also played an central role in statistical learning theory. It has long been known that even when the number of parameters far outnumber the number of training samples, a small weight norm can be used to guarantee off-sample generalization for empirical risk minimizers \citep{bartlett1998a}. Of course, due to the high expressive power of neural network models, even with strong weight regularization it is possible to perfectly fit random labels \citep{zhang2017a}, leading to a gap between the test error (chance level) and training error (zero) that is methodologically unsatisfactory. This has motivated a variety of new approaches to measure off-sample generalization \citep{jiang2020a}, as well as to quantify model complexity, such as the degree to which a model can be meaningfully compressed \citep{arora2018a}.

The notion of ``flat minima'' is seen in the early work of \citet{hochreiter1994a,hochreiter1997a}, which considers both how to measure sharpness, and heuristics for actually finding candidates in flat regions. The basic underlying notion is that of measuring ``volume,'' namely the idea that a ``flat'' point is one from which we need to go far in most (if not all) directions for the objective function to increase a certain fixed amount. See more recent work by \citet{wu2017a} for related notions of volume in this context. These notions of sharpness are intimately related to properties of the Hessian matrix of the underlying objective function, even when the loss is not based on negative log-likelihood, and an active line of research is centered around the eigenvalue distribution of this Hessian. See for example \citet{chaudhari2017a} and \citet{karakida2019a} for representative work. For sufficiently ``regular'' models, the determinant of the Fisher information matrix plays a central role in the complexity term used to implement the minimum description length (MDL) principle \citep{grunwald2007MDLbook}; see also early work from \citet{hinton1993a} and more recent work by \citet{jia2020a} in the context of neural networks. More generally, however, many neural networks do not satisfy these regularity conditions, and new technical innovations based on the Fisher information have been explored to bridge this gap in recent years \citep{sun2021a}.

\subsection{Algorithms that generalize well}

The empirical effectiveness of deep learning goes well beyond what we would expect based purely on learning theoretical insights \citep{sejnowski2020a}. This success is driven by a handful of workhorse stochastic gradient-based solvers \citep{schmidt2021a}, often coupled with explicit norm-based regularization and a number of techniques used to stabilize learning and effectively constrain the model candidate which is selected by the learning algorithm.\footnote{These include early stopping, modifying mini-batch size, dropout, batch normalization, stochastic depth, data augmentation, and ``mixup'' (mixed sample augmentations) among others. See also \citet{montavon2012NNtricks} for techniques that were established in the decades before the current wave of deep learning.} A rich line of research has developed over the past decade looking at why a certain algorithmic ``recipe'' tends to generalize well. The tendency for stochastic gradient descent to ``escape'' from regions near undesirable critical points is one key theme; see \citet{xie2020a} for example. For influential work on relating sharpness, mini-batch size, and (weight) norms to off-sample generalization, see \citet{keskar2017a} and \citet{neyshabur2017a}. In both papers, the notion of measuring sharpness by a worst-case perturbation appears, and this is pursued even further by \citet{foret2021a} in the well-known sharpness-aware minimization (SAM) algorithm, and extensions due to \citet{kwon2021a} and \citet{zhao2022a}. These algorithms, as well as the Fisher information-based procedure of \citet{jia2020a}, all involve a forward-difference implementation of explicit gradient regularization (using squared Euclidean norm), and recent work from \citet{karakida2023a} compares this approach with that of direct back-propagation approach. \citet{barrett2021a} look at both \emph{implicit} and \emph{explicit} gradient regularization. The implicit side contrasts the path of ``continuous'' gradient-based updates with the ``discrete'' updates made in practice (continuous/discrete with respect to \emph{time}), saying that the discrete updates, even when computed based on an unregularized objective function, tend to move closer to the (continuous) path of a regularized objective, where regularization is in terms of the (squared) gradient norms. Inspired by this finding, they also consider explicit GR in the same way; see also \citet{smith2021a}.

\section{Technical appendix}

\subsection{More details on Flooding and sharpness}\label{sec:flooding_and_sharpness}

When the empirical risk goes below the threshold $\theta$, the Flooding update (\ref{eqn:flood_batch}) attempts to push it back up above $\theta$. Consider the case in which this occurs in a single step, i.e., the situation in which at some step $t$, the pair of sequential iterates $(w_{t},w_{t+1})$ satisfy the following:
\begin{align}\label{eqn:flood_fd_condition}
\rdv{R}_{n}(w_{t}) < \theta \text{ and } \rdv{R}_{n}(w_{t+1}) > \theta.
\end{align}
When condition (\ref{eqn:flood_fd_condition}) holds, some basic algebra immediately shows us that running two iterations of the Flooding update (\ref{eqn:flood_batch}) yields the equality
\begin{align}\label{eqn:flood_fd_link}
w_{t+2} = w_{t} - \alpha^{2} \left( \frac{\nabla\rdv{R}_{n}(w_{t}+\alpha\nabla\rdv{R}_{n}(w_{t}))-\nabla\rdv{R}_{n}(w_{t})}{\alpha}\right),
\end{align}
telling us that the result is equivalent to running one iteration of FD descent with step size $\alpha^{2}$ on the GR penalty $\Abs{\nabla \rdv{R}_{n}(\cdot)}^{2}$ at $w_{t}$, using the forward FD approximation described earlier in \S{\ref{sec:sharpness_links}}. To the best of our knowledge, this link was first highlighted by \citet[\S{5.1}]{karakida2023a}. In a sense, this is a natural complement to the strategy employed in (\ref{eqn:finite_diff_approx}); instead of tackling the GR objective $\widetilde{\rdv{R}}_{n}(w;\lambda)$ in (\ref{eqn:gr_objective}) directly, the Flooding algorithm can iterate back and forth between optimizing the empirical risk and the squared gradient norm. The GR effect is thus constrained to regions in $\WW$ with $\theta$-small empirical risk, but all updates outside this region enjoy the same per-step computational complexity as vanilla GD.

\subsection{Non-smooth loss setting}\label{sec:convergence_nonsmooth}

All of the analysis in \S{\ref{sec:theory}} relies heavily on smoothness of the underlying loss function. Here we consider the case in which the loss itself may not even be differentiable. All we ask is that the losses be $L$-Lipschitz on $\WW$ in expectation, i.e., $\exx_{\ddist}\abs{\ell(w_{1};\rdv{Z})-\ell(w_{2};\rdv{Z})} \leq L\Abs{w_{1}-w_{2}}$ for all $w_{1},w_{2} \in \WW$. As an explicit objective function, we start with $\mathrm{S}_{\ddist}$ as given in (\ref{eqn:new_obj_softad}), but with the understanding that $\rho$ can actually be any $1$-Lipschitz function, capturing the two special cases of interest, namely $\rho(x) = \sqrt{x^{2}+1}-1$ for SoftAD and $\rho(x) = \abs{x}$ for Flooding. Shifting our focus to function values (rather than gradients), we will also need to assume a second moment bound
\begin{align}\label{eqn:moment_bd_nonsmooth}
\exx_{\ddist}\left(\theta + \rho(\ell(w;\rdv{Z})-\theta)\right)^{2} \leq V < \infty
\end{align}
that holds over $w \in \WW$. With these basic assumptions in place, stationarity guarantees are available via a smooth approximation of $\mathrm{S}_{\ddist}$.
\begin{prop}[Stationarity, non-smooth case]\label{prop:stationarity_nonsmooth}
Choosing an initial value $w_{1} \in \WW$, run the algorithm described in Proposition \ref{prop:stationarity_softad_smooth}, but re-defining the core gradients used for updating as
\begin{align*}
\rdv{G}_{t}(w) \defeq \frac{d}{r}\left( \theta + \rho(\ell(w+r\rdv{U}_{t};\rdv{Z}_{t})-\theta) \right)\rdv{U}_{t}
\end{align*}
where $(\rdv{U}_{1},\rdv{U}_{2},\ldots)$ is a sequence of independent vectors sampled uniformly at random from the unit sphere, and $r > 0$ sets the smoothing radius. In addition, the norm threshold is set as $\gamma = \sqrt{dL/((1-b)r)}$ (with $b$ unchanged). Stationarity of the resulting sequence $(w_{1},w_{2},\ldots)$, assumed to be in $\WW$, can be controlled with probability $1-\delta$ as
\begin{align*}
\frac{1}{T} \sum_{t=1}^{T} \Abs{\nabla\overbar{\mathrm{S}}_{\ddist}(w_{t};r)} \leq \frac{1}{T^{1/4}}\left(\overbar{\risk}_{\ddist}(w_{1})-\overbar{\risk}_{\ddist}(w_{T+1}) + \frac{3dL}{2r} + \frac{2d}{r}\sqrt{V}\left(1+C_{\delta}\right) \right)
\end{align*}
where $\overbar{\mathrm{S}}_{\ddist}(w;r) \defeq \exx[\mathrm{S}_{\ddist}(w+r\rdv{V})]$ is the $r$-smoothed approximation of the objective $\mathrm{S}_{\ddist}$, with $\rdv{V}$ distributed uniformly over the unit ball. Probability is taken over the random draw of $(\rdv{Z}_{1},\rdv{Z}_{2},\ldots)$ and $(\rdv{U}_{1},\rdv{U}_{2},\ldots)$, and the confidence factor $C_{\delta}$ matches that given in Proposition \ref{prop:stationarity_softad_smooth}.
\end{prop}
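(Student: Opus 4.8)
The plan is to recognize Proposition \ref{prop:stationarity_nonsmooth} as an instance of the smooth-case argument behind Proposition \ref{prop:stationarity_softad_smooth}, applied not to $\mathrm{S}_{\ddist}$ itself (which need not be differentiable when $\ell$ is merely Lipschitz) but to its randomized ball-smoothing $\overbar{\mathrm{S}}_{\ddist}(\cdot;r)$. Concretely, I would show that the triple consisting of the objective $\overbar{\mathrm{S}}_{\ddist}(\cdot;r)$, the zeroth-order gradients $\rdv{G}_{t}$, and the parameter choices $\{b,\gamma,\alpha\}$ meets the hypotheses of the normalized-momentum-SGD-with-clipping bound of \citet{cutkosky2021aNeurIPS} that drives Proposition \ref{prop:stationarity_softad_smooth}: (i) the $\rdv{G}_{t}$ are unbiased for $\nabla\overbar{\mathrm{S}}_{\ddist}(\cdot;r)$; (ii) $\overbar{\mathrm{S}}_{\ddist}(\cdot;r)$ is smooth with a known constant; and (iii) the gradients have a uniformly bounded second moment. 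With these three inputs in hand, the conclusion follows by invoking the same high-probability bound, substituting the new smoothness constant and noise level for their non-smooth analogues.

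For unbiasedness I would appeal to the standard spherical-smoothing identity: for any $g$, writing $\overbar{g}(w;r) \defeq \exx[g(w+r\rdv{V})]$ with $\rdv{V}$ uniform on the unit ball, one has $\nabla\overbar{g}(w;r) = (d/r)\exx_{\rdv{U}}[g(w+r\rdv{U})\rdv{U}]$ with $\rdv{U}$ uniform on the unit sphere. Taking the tower expectation over the fresh sample $\rdv{Z}_{t}$ (independent of $\rdv{U}_{t}$) and using $\theta + \exx_{\ddist}\rho(\ell(w+r\rdv{U}_{t};\rdv{Z})-\theta) = \mathrm{S}_{\ddist}(w+r\rdv{U}_{t})$ from (\ref{eqn:new_obj_softad}), this gives $\exx[\rdv{G}_{t}(w)] = \nabla\overbar{\mathrm{S}}_{\ddist}(w;r)$. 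For smoothness I would first note that $\mathrm{S}_{\ddist}$ is $L$-Lipschitz: since $\rho$ is $1$-Lipschitz, $\abs{\mathrm{S}_{\ddist}(w_{1})-\mathrm{S}_{\ddist}(w_{2})} \leq \exx_{\ddist}\abs{\ell(w_{1};\rdv{Z})-\ell(w_{2};\rdv{Z})} \leq L\Abs{w_{1}-w_{2}}$ by the assumed Lipschitz-in-expectation bound. The classical fact that ball-smoothing of an $L$-Lipschitz function on $\RR^{d}$ yields a $(dL/r)$-smooth function then identifies the smoothness constant, matching the $3dL/(2r)$ term in the stated bound (playing the role of $L_{\textup{AD}}$ in Proposition \ref{prop:stationarity_softad_smooth}).

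The second-moment bound is immediate: since $\Abs{\rdv{U}_{t}}=1$, we have $\Abs{\rdv{G}_{t}(w)}^{2} = (d/r)^{2}(\theta+\rho(\ell(w+r\rdv{U}_{t};\rdv{Z}_{t})-\theta))^{2}$, so (\ref{eqn:moment_bd_nonsmooth}) gives $\exx\Abs{\rdv{G}_{t}(w)}^{2} \leq d^{2}V/r^{2}$ uniformly over $w\in\WW$. This pins the noise level at $d\sqrt{V}/r$, matching the $(2d/r)\sqrt{V}(1+C_{\delta})$ term. With smoothness $dL/r$ and this second moment in place, I would set $b$, $\gamma$, $\alpha$ as prescribed and feed them into the Cutkosky--Mehta recursion exactly as in Proposition \ref{prop:stationarity_softad_smooth}: the clipped, momentum-averaged normalized updates yield the $T^{-1/4}$-rate high-probability stationarity guarantee in terms of $\Abs{\nabla\overbar{\mathrm{S}}_{\ddist}(\cdot;r)}$, with the confidence factor $C_{\delta}$ carried through unchanged.

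The main obstacle is establishing the two zeroth-order facts cleanly — the gradient identity and, especially, the $(dL/r)$-smoothness of the ball-smoothed objective — together with justifying the interchange of differentiation and expectation (legitimate precisely because the smoothing renders the map differentiable everywhere even though $\mathrm{S}_{\ddist}$ is not). Beyond that, no new optimization analysis is required: the argument is a parameter substitution into the already-established smooth-case bound, with $dL/r$ and $d^{2}V/r^{2}$ replacing $L_{\textup{AD}}$ and $L_{\textup{AD}}-L_{\ell}$. I would only need to confirm that the clipping-and-momentum bookkeeping of \citet{cutkosky2021aNeurIPS} is insensitive to whether the smoothness and variance constants coincide, which it is, since they enter the recursion as independent parameters.
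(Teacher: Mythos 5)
Your proposal is correct and follows essentially the same route as the paper's proof: establish that $\mathrm{S}_{\ddist}$ is $L$-Lipschitz via the $1$-Lipschitzness of $\rho$, invoke the spherical-smoothing gradient identity to get unbiasedness and $(dL/r)$-smoothness of $\overbar{\mathrm{S}}_{\ddist}(\cdot;r)$, bound the second moment by $(d/r)^{2}V$ using $\Abs{\rdv{U}_{t}}=1$, and substitute these constants into the bound of Proposition \ref{prop:stationarity_softad_smooth}. The paper does exactly this, deriving the $(dL/r)$-smoothness explicitly from the gradient representation rather than citing it as a classical fact, but the content is identical.
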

\begin{rmk}[Stationarity in the original objective]
When the original objective $\mathrm{S}_{\ddist}(\cdot)$ is sufficiently well-behaved, e.g., differentiable almost everywhere and Lipschitz, then stationarity guarantees in terms of the $r$-smoothed objective $\overbar{\mathrm{S}}_{\ddist}(\cdot;r)$ can be easily translated into analogous guarantees for $\widetilde{\risk}_{\ddist}(\cdot)$. In addition, recent work by \citet{cutkosky2023a} shows how a modified algorithmic approach can be used to achieve faster rates under such congenial (but still non-convex and non-smooth) conditions.
\end{rmk}

\subsection{Additional proofs}

\begin{proof}[Proof of Proposition \ref{prop:stationarity_softad_smooth}]
The machinery of \citet[Thm.~2]{cutkosky2021aNeurIPS} gives us the ability to control the stationarity of sequences generated using the described procedure (norm-clipping, momentum, normalization), just assuming the ``raw'' stochastic gradients (here, $\rdv{G}_{t}$) are unbiased estimators of a smooth function. As such, we just need to ensure the assumptions underlying their Theorem 2 (henceforth, \term{CHT2}) are met; the key points have already been described in the main text, so we just fill in the details here. The ``unbiased estimator'' property we refer to means that we want
\begin{align}\label{eqn:stationarity_softad_smooth_1}
\exx_{\ddist}\rdv{G}_{t}(w) = \nabla \mathrm{S}_{\ddist}(w)
\end{align}
to hold for all $w \in \WW$. Fortunately, this holds under very weak assumptions; the running assumption that $L_{\textup{AD}} < \infty$ is more than sufficient.\footnote{For a more general result, see \citet[Lem.~2]{holland2023bdd} for example.} In addition, finite $L_{\textup{AD}}$ also implies that the objective (\ref{eqn:new_obj_softad}) is smooth in the sense that
\begin{align}\label{eqn:stationarity_softad_smooth_2}
\Abs{\nabla\mathrm{S}_{\ddist}(w_{1})-\nabla\mathrm{S}_{\ddist}(w_{2})} \leq L_{\textup{AD}}\Abs{w_{1}-w_{2}}
\end{align}
for any $w_{1}, w_{2} \in \WW$; this is proved in \S{\ref{sec:smoothness_details}}. In addition, uniform second moment bounds naturally imply pointwise bounds, so we have
\begin{align}\label{eqn:stationarity_softad_smooth_3}
\exx_{\ddist}\Abs{\nabla\ell(w;\rdv{Z})}^{2} \leq \exx_{\ddist}\left[ \sup_{w \in \WW} \Abs{\nabla\ell(w;\rdv{Z})}^{2} \right] \leq L_{\textup{AD}} - L_{\ell}
\end{align}
for each $w \in \WW$. Taken with the construction of sequence $(w_{1},w_{2},\ldots)$ in the hypothesis, the properties (\ref{eqn:stationarity_softad_smooth_1})--(\ref{eqn:stationarity_softad_smooth_3}) ensure all the basic requirements of CHT2 are met (with their ``$\mathfrak{p}$'' at $2$). Noting that we assume $\WW \subseteq \RR^{d}$ using the standard norm and inner product on Euclidean space, the Banach space generality in CHT2 is not needed (their ``$C$'' and ``$p$'' can be fixed to $1$ and $2$ respectively). For reference, the complete upper bound implied by CHT2 is
\begin{align}\label{eqn:stationarity_softad_smooth_4}
\frac{\mathrm{S}_{\ddist}(w_{1})-\mathrm{S}_{\ddist}(w_{T+1})}{T \alpha}%
 + \frac{\alpha L_{\textup{AD}}}{2}%
 + \frac{2b\sqrt{L_{\textup{AD}} - L_{\ell}}}{(1-b)T}%
 + \frac{2b \alpha L_{\textup{AD}}}{(1-b)}%
 + 2\sqrt{(1-b)(L_{\textup{AD}} - L_{\ell})}C_{\delta}
\end{align}
where for readability the coefficient in the right-most summand is defined by
\begin{align*}
C_{\delta} \defeq 10\log(3T/\delta) + 4\sqrt{\log(3T/\delta)} + 1.
\end{align*}
We have simplified all the terms in CHT2 involving $\max\{1,\log(3T/\delta)\}$, since as long as $T > 0$ and $0 < \delta < 1$, we trivially have $3T/\mathrm{e} \geq 1 > \delta$ and thus $\log(3T/\delta) \geq 1$. Furthermore, their free parameters ``$b$'' (different from our $b$) and ``$s$'' are both set to $1$, without loss of generality. Plugging in our settings of $\alpha$ and $b$ to the bound in (\ref{eqn:stationarity_softad_smooth_4}) and bounding $(1-1/\sqrt{T}) \leq 1$ for readability yields the desired upper bound.
\end{proof}

\begin{proof}[Proof of Proposition \ref{prop:stationarity_flood_smooth}]
Here we leverage the projected sub-gradient analysis done by \citet{davis2019b}, in particular their Theorem 3 (henceforth, \term{DDT3}). The core of their argument relies upon a weak convexity property held by a rather large class of composite functions, namely compositions of the form $f = h \circ g$, where $g$ is smooth and $h$ is both convex and Lipschitz. Considering the non-smooth objective function
\begin{align}\label{eqn:stationarity_flood_smooth_1}
w \mapsto \theta + \exx_{\ddist}\abs{\ell(w;\rdv{Z})-\theta},
\end{align}
it can be taken as a compound function by writing $\exx_{\ddist}f(w;\rdv{Z})$ with $f(w;z) \defeq h(g(w;z))$, where $g(w;z) \defeq \ell(w;z)$ and $h(x) \defeq \theta + \abs{x - \theta}$. Fixing $z \in \ZZ$ for now, clearly $h$ is $1$-Lipschitz and convex. By assumption, we have that $g(\cdot;z)$ is $L_{\ell}^{\ast}$-smooth and locally Lipschitz. Then, using standard arguments, it is straightforward to show that $f(\cdot;z)$ is $L_{\ell}^{\ast}$-weakly convex.\footnote{See for example \citet[Lem.~4.2]{drusvyatskiy2019a} and \citet[Prop.~8]{holland2022c}.} Since the weak convexity parameter $L_{\ell}^{\ast}$ does not depend on the arbitrary choice of $z$, it follows that the function (\ref{eqn:stationarity_flood_smooth_1}) is $L_{\ell}^{\ast}$-weakly convex. In the setting of DDT3, their ``$f(\cdot)$'' is $\theta + \exx_{\ddist}\abs{\ell(w\cdot;\rdv{Z})-\theta}$, and their ``$\mathcal{X}$'' is $\WW$ here. The bound on the expected squared stochastic gradient norms (their ``$L^{2}$'') is our $L_{\textup{AD}}-L_{\ell}$ just as in the proof of Proposition \ref{prop:stationarity_softad_smooth}. Finally, the key ``unbiased estimator'' property in this case deals with sub-differentials, namely we require that
\begin{align}
\exx_{\ddist}\rdv{G}_{t}(w) \in \partial \exx_{\ddist}\abs{\ell(w\cdot;\rdv{Z})-\theta}
\end{align}
for all $w \in \WW$. Fortunately this basic property holds under very weak assumptions that are trivially satisfied when $L_{\textup{AD}}$ is finite.\footnote{See for example \citet[Prop.~14]{holland2022c}.} With these facts in place, we simple apply DDT3, in particular their inequality (3.5), with their ``$\rho$'' corresponding to our $L_{\ell}^{\ast}$ here, and their ``$\varphi_{\lambda}$'' corresponding to our (\ref{eqn:new_obj_flood}), with ``$\lambda$'' as our $\beta$. The desired bound follows by applying their result to the specified procedure over $T-1$ updates (instead of their $T$ updates).
\end{proof}

\begin{proof}[Proof of Proposition \ref{prop:stationarity_nonsmooth}]
To begin, under the assumptions given, the objective $\mathrm{S}_{\ddist}$ clearly inherits the Lipschitz property that the losses have in expectation; since $\rho$ is $1$-Lipschitz, we have
\begin{align}\label{eqn:stationarity_nonsmooth_0}
\abs{\mathrm{S}_{\ddist}(w_{1})-\mathrm{S}_{\ddist}(w_{2})} \leq \exx_{\ddist}\abs{\ell(w_{1};\rdv{Z})-\ell(w_{2};\rdv{Z})} \leq L\Abs{w_{1}-w_{2}}.
\end{align}
We proceed by using a standard technique for function smoothing.\footnote{See for example \citet{flaxman2004a,nesterov2017a}.} If we let $\rdv{V}$ be a random vector distributed over the unit ball $\{x \in \RR^{d}: \Abs{x} \leq 1\}$, then regardless of whether $\mathrm{S}_{\ddist}(\cdot)$ is differentiable or not, one can obtain a smooth approximation by averaging over random $r$-length perturbations, namely
\begin{align}\label{eqn:stationarity_nonsmooth_1}
\overbar{\mathrm{S}}_{\ddist}(w;r) \defeq \exx\left[\mathrm{S}_{\ddist}(w+r\rdv{V})\right].
\end{align}
A critical property of the function given in (\ref{eqn:stationarity_nonsmooth_1}) is that it is differentiable and its gradient can be represented explicitly in terms of the function it is trying to smooth, namely we have
\begin{align}\label{eqn:stationarity_nonsmooth_2}
\nabla\overbar{\mathrm{S}}_{\ddist}(w;r) = \frac{d}{r} \exx\left[\mathrm{S}_{\ddist}(w+r\rdv{U})\rdv{U}\right] = \frac{d}{r} \exx\left[\left(\theta+\exx_{\ddist}\rho(\ell(w+r\rdv{U})-\theta)\right)\rdv{U}\right]
\end{align}
for any choice of $r > 0$ and $w \in \WW$, where $\rdv{U}$ is uniformly distributed on the unit sphere $\{x \in \RR^{d}: \Abs{x}=1\}$ \citep[Lem.~1]{flaxman2004a}.\footnote{Not to be confused with $\rdv{V}$ in (\ref{eqn:stationarity_nonsmooth_1}), which is uniform on the unit \emph{ball}.} This means that Lipschitz properties on the original function translate to smoothness properties for the new function. Making this more explicit, using the equality (\ref{eqn:stationarity_nonsmooth_2}), note that for any choice of $w_{1}, w_{2} \in \WW$, we have
\begin{align*}
\nabla\overbar{\mathrm{S}}_{\ddist}(w_{1};r)-\nabla\overbar{\mathrm{S}}_{\ddist}(w_{2};r) = \frac{d}{r}\exx\left[\left(\mathrm{S}_{\ddist}(w_{1}+r\rdv{U})-\mathrm{S}_{\ddist}(w_{2}+r\rdv{U})\right)\rdv{U}\right].
\end{align*}
Taking norms and using the Lipschitz property (\ref{eqn:stationarity_nonsmooth_0}) of $\mathrm{S}_{\ddist}$, we observe that
\begin{align*}
\abs{\nabla\overbar{\mathrm{S}}_{\ddist}(w_{1};r)-\nabla\overbar{\mathrm{S}}_{\ddist}(w_{2};r)} & \leq \frac{d}{r}\exx\Abs{U}\abs{\mathrm{S}_{\ddist}(w_{1}+r\rdv{U})-\mathrm{S}_{\ddist}(w_{2}+r\rdv{U})}\\
& \leq \frac{dL}{r}\Abs{w_{1}-w_{2}}
\end{align*}
and thus have that the smoothed function $\overbar{\mathrm{S}}_{\ddist}(\cdot;r)$ is $(dL/r)$-smooth over $\WW$. This means the function is analogous to the objective function (\ref{eqn:new_obj_softad}) used in Proposition \ref{prop:stationarity_softad_smooth}, except with unbiased stochastic gradients taking the form
\begin{align}\label{eqn:stationarity_nonsmooth_3}
\rdv{G}_{t}(w) \defeq \frac{d}{r}\left( \theta + \rho(\ell(w+r\rdv{U}_{t};\rdv{Z}_{t})-\theta) \right)\rdv{U}_{t}
\end{align}
for $t \geq 1$, where each $\rdv{U}_{t}$ is an independent copy of $\rdv{U}$ from (\ref{eqn:stationarity_nonsmooth_2}). From this point, the remainder of the proof is basically identical to that of Proposition \ref{prop:stationarity_softad_smooth}; the only remaining changes are the smoothness factor and the second moment bound. For the former, we use $dL/r$ in place of $L_{\textup{AD}}$, which also impacts the norm clipping radius $\gamma$. For the latter, since we are assuming $w_{t} + r\rdv{U}_{t} \in \WW$ for each $t$, and using the bound (\ref{eqn:moment_bd_nonsmooth}), we have
\begin{align*}
\exx\Abs{\rdv{G}_{t}(w)}^{2} \leq \sup_{w \in \WW} \left(\frac{d}{r}\right)^{2}\exx\Abs{\rdv{U}_{t}}^{2}\left(\theta + \rho(\ell(w;\rdv{Z}_{t})-\theta)\right)^{2} \leq \left(\frac{d}{r}\right)^{2}V.
\end{align*}
Plugging in these two remaining modified factors to the bounds obtained in Proposition \ref{prop:stationarity_softad_smooth} yields the desired result.
\end{proof}

\subsection{Gradient of GR objective}

With $w = (w_{1},\ldots,w_{d}) \in \RR^{d}$, we will frequently use $\partial_{j}$ to denote partial derivatives taken with respect to $w_{j}$, i.e., for a differentiable function $f:\RR^{d} \to \RR$, we write
\begin{align}
\partial_{j} f(w) \defeq \lim\limits_{\abs{a} \to 0} \frac{f(w_{1},\ldots,w_{j}+a,\ldots,w_{d})-f(w)}{a}
\end{align}
with analogous definitions for all $j=1,\ldots,d$. With the above notation in place, note that basic calculus gives us
\begin{align*}
\partial_{j} \Abs{\nabla \rdv{R}_{n}(w)}^{2} = \sum_{k=1}^{d} \partial_{j}\left( \partial_{k} \rdv{R}_{n}(w) \right)^{2} = 2 \sum_{k=1}^{d} \left(\partial_{k} \rdv{R}_{n}(w)\right) \left(\partial_{j}\partial_{k}\rdv{R}_{n}(w)\right).
\end{align*}
As such, the gradient takes the form
\begin{align*}
\nabla \Abs{\nabla \rdv{R}_{n}(w)}^{2} = 2 \nabla^{2}\rdv{R}_{n}(w)\left(\nabla\rdv{R}_{n}(w)\right)
\end{align*}
where $\nabla^{2}\rdv{R}_{n}$ denotes the $d \times d$ Hessian matrix of $\rdv{R}_{n}$, and $\nabla\rdv{R}_{n}(w)$ is taken as a column vector (a $d \times 1$ matrix) for the purpose of this multiplication.

\subsection{Smoothness check}\label{sec:smoothness_details}

Let us consider a simple, non-stochastic example in one dimension. Letting $f:\RR \to \RR$ be some differentiable function, we consider how the transformed gradient $\phi(f(x)-\theta)f^{\prime}(x)$ behaves under $\phi=\sign$ and $\phi=\rho^{\prime}(x) = x / \sqrt{x^{2}+1}$. As we have seen visually in Figure \ref{fig:demo_quadratic}, the soft threshold of SoftAD makes it possible to have Lipschitz gradients, which impacts iterative optimization procedures. For arbitrary values $x_{1}$ and $x_{2}$, taking the difference of transformed gradients using arbitrary $\phi$, we can write
\begin{align}
\nonumber
\phi&(f(x_{1})-\theta)f^{\prime}(x_{1}) - \phi(f(x_{2})-\theta)f^{\prime}(x_{2})\\
\label{eqn:gradient_diffs}
& = \left( \phi(f(x_{1})-\theta) - \phi(f(x_{2})-\theta) \right)f^{\prime}(x_{1}) + \phi(f(x_{2})-\theta)\left( f^{\prime}(x_{1}) - f^{\prime}(x_{2}) \right).
\end{align}
Note that even if $\abs{x_{1}-x_{2}} < \varepsilon$ for some arbitrarily small $\varepsilon > 0$, if for example the threshold is such that $f(x_{1}) < \theta < f(x_{2})$, then under $\phi = \sign$, the difference multiplying $f^{\prime}(x_{1})$ cannot be arbitrarily small, even if $f$ is Lipschitz. On the other hand, such a property follows easily when $\phi = \rho^{\prime}$, since $\rho^{\prime}$ itself is $1$-Lipschitz.

Returning to our more general learning setup, let us denote the modified gradients concisely as $g(w;z) \defeq \phi(\ell(w;z)-\theta)\nabla\ell(w;z)$. With random variable $\rdv{Z} \sim \ddist$, taking any two points $w_{1},w_{2} \in \WW$, based on the equality (\ref{eqn:gradient_diffs}), the normed difference of the gradient expectations can be bounded as
\begin{align*}
\Abs{\exx_{\ddist}g(w_{1};\rdv{Z})-\exx_{\ddist}g(w_{2};\rdv{Z})} \leq B_{1} + B_{2}
\end{align*}
with $B_{1}$ and $B_{2}$ defined as
\begin{align*}
B_{1} & \defeq \exx_{\ddist}\Abs{\nabla\ell(w_{1};\rdv{Z})}\abs{\phi(\ell(w_{1};\rdv{Z})-\theta)-\phi(\ell(w_{2};\rdv{Z})-\theta)}\\
B_{2} & \defeq \exx_{\ddist}\abs{\phi(\ell(w_{2};\rdv{Z})-\theta)}\Abs{\nabla\ell(w_{1};\rdv{Z})-\nabla\ell(w_{2};\rdv{Z})}.
\end{align*}
Bounding each of these terms is trivial when the functions $\ell$ and $\phi$ are smooth enough. First, note that if $\phi$ is $L_{\phi}$-Lipschitz and $\WW$ is a convex subset of $\RR^{d}$, we have
\begin{align*}
B_{1} & \leq L_{\phi}\exx_{\ddist}\Abs{\nabla\ell(w_{1};\rdv{Z})}\abs{\ell(w_{1};\rdv{Z})-\ell(w_{2};\rdv{Z})}\\
& \leq L_{\phi}\Abs{w_{1}-w_{2}}\exx_{\ddist}\Abs{\nabla\ell(w_{1};\rdv{Z})} \sup_{0 < a < 1} \Abs{\nabla\ell(aw_{1} + (1-a)w_{2};\rdv{Z})}\\
& \leq L_{\phi}\Abs{w_{1}-w_{2}}\exx_{\ddist}\left[ \sup_{w \in \WW} \Abs{\nabla\ell(w;\rdv{Z})}^{2} \right],
\end{align*}
noting that the second inequality uses the mean value theorem on differentiable $\ell(\cdot;z)$, applied pointwise in $z \in \ZZ$, and the last inequality uses convexity of $\WW$. This bounds $B_{1}$. Moving on to $B_{2}$, note that if $\abs{\phi(x)}$ is bounded by $B_{\phi}$ and the losses are $L_{\ell}$-smooth in expectation, we have
\begin{align*}
B_{2} & \leq B_{\phi}\exx_{\ddist}\Abs{\nabla\ell(w_{1};\rdv{Z})-\nabla\ell(w_{2};\rdv{Z})}\\
& \leq B_{\phi}L_{\ell}\Abs{w_{1}-w_{2}}.
\end{align*}
Taking these new bounds together, we have
\begin{align*}
\Abs{\exx_{\ddist}g(w_{1};\rdv{Z})-\exx_{\ddist}g(w_{2};\rdv{Z})} \leq \left( L_{\phi}\exx_{\ddist}\left[ \sup_{w \in \WW} \Abs{\nabla\ell(w;\rdv{Z})}^{2} \right] + B_{\phi}L_{\ell} \right) \Abs{w_{1}-w_{2}},
\end{align*}
namely a Lipschitz property in expectation for the modified gradients.

\section{Empirical appendix}\label{sec:empirical_appendix}

Here we provide additional details and results related to the empirical tests described in \S{\ref{sec:empirical}}.

\paragraph{Software and hardware}
All of the experiments done in this section have been implemented using PyTorch 2, utilizing three machines each using a single-GPU implementation, i.e., there is no parallelization across multiple machines or GPUs. Two units are equipped with an NVIDIA A100 (80GB), and the remaining machine uses an NVIDIA RTX 6000 Ada. We use the MLflow library for storing and retrieving metrics and experiment details. Our coding of SAM follows that of David Samuel (\url{https://github.com/davda54/sam}), which is the PyTorch implementation acknowledged in the original SAM paper of \citet{foret2021a}.

\subsection{Non-linear binary classification on the plane}\label{sec:empirical_synthetic}

\paragraph{Data}
The three types of synthetic data that we generate here differ chiefly in the degree of non-linearity; see Figure \ref{fig:synthetic} for an example. The ``two Gaussians'' dataset is almost linearly separable, save for some overlap of the two distributions. The ``sinusoid'' data is separated by a simple curve, easily approximated by a low-order polynomial, but the curves in the ``spiral'' data are a bit more complicated. Exact implementation details, plus historical references, are given by \citet[\S{4.1}]{ishida2020a}. For each trial, we generate training and validation data of size 100, and test data of size 20000. All methods see the same data in each trial.

\begin{figure}[t]
\centering
\includegraphics[width=0.2\textwidth]{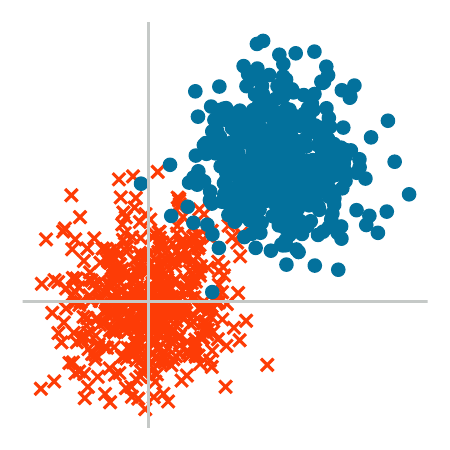}\includegraphics[width=0.2\textwidth]{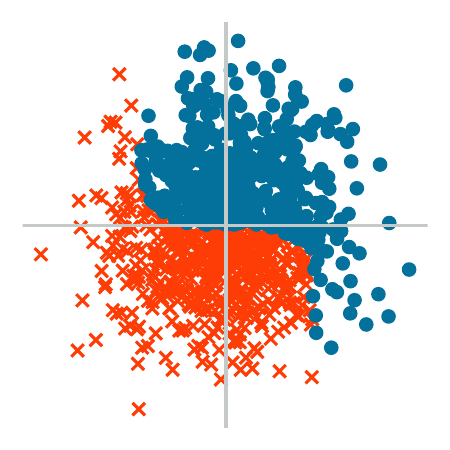}\includegraphics[width=0.2\textwidth]{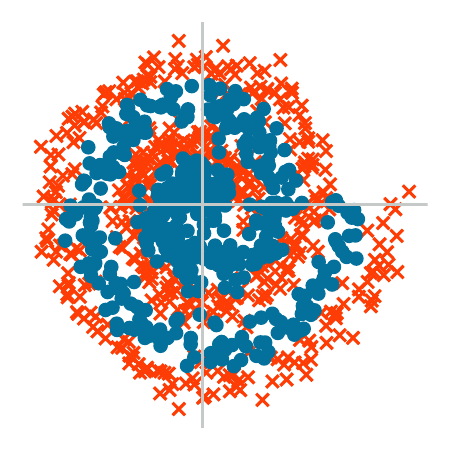}
\caption{Synthetic dataset examples. From left to right: ``two Gaussians,'' ``sinusoid,'' and ``spiral.''}
\label{fig:synthetic}
\end{figure}

\paragraph{Model}
For each dataset, we use the same model, namely a simple feedforward neural network with four hidden layers, 500 units per layer, using batch normalization and ReLU activations at each layer.\footnote{\citet{ishida2020a} say they use a ``five-hidden-layer feedforward neural network,'' but looking at their public code, the number of hidden layers (i.e., number of linear transformations excluding that of the output layer) is actually four.}

\paragraph{Algorithms}
In line with the experiments we are trying to replicate, all methods (ERM, Flooding, SAM, and SoftAD) are driven by the Adam optimizer, using a fixed learning rate of 0.001, with no momentum or weight decay. All methods use the multi-class logistic loss as their base loss (i.e., \texttt{nn.CrossEntropyLoss} in PyTorch), and are run for 500 epochs. We use mini-batch size of 50 here, but key trends remain the same for full-batch (of size 100) runs.

\paragraph{Hyperparameter selection}
ERM has no hyperparameters, but all the other methods have one each. Flooding and SoftAD both have the threshold parameter $\theta$ seen in \S{\ref{sec:bg}}--\S{\ref{sec:method}}, and SAM has a radius parameter (denoted ``$\rho$'' in the original paper). For each of these methods, in each trial, we select from a grid of 40 points spaced linearly between $0.01$ and $2.0$. Selection is based on classification accuracy on validation data.

\subsection{Image classification from scratch}\label{sec:empirical_benchmarks}

Our second set of experiments utilizes four well-known benchmark datasets for multi-class image classification. Compared to the synthetic experiments done in \S{\ref{sec:empirical_synthetic}}, the classification task is more difficult (much larger inputs, variation within classes, more classes), and so we utilize more sophisticated neural network models to tackle the classification task. That said, as the sub-section title indicates, this training is done ``from scratch,'' i.e., no pre-trained models are used.

\paragraph{Data}
The datasets we use are all standard benchmarks in the machine learning community: CIFAR-10, CIFAR-100, FashionMNIST, and SVHN. All of these datasets are collected using classes defined in the \texttt{torchvision.datasets} module, with raw training/test splits left as-is with default settings. As such, across all trials the test set is constant, but in each trial we randomly select 80\% of the raw training data to be used for actual training, with the remaining 20\% used for validation. We normalize all pixel values in the image data to the unit interval $[0,1]$; this is done separately for training, validation, and testing data.

\paragraph{Models}
Unlike the previous sub-section, here we use different models for different data sets. Model choice essentially mirrors that of \citet[\S{4.2}]{ishida2020a}. For FashionMNIST, we flatten each image into a vector, and use a simple feedforward neural network composed of a single hidden layer with 1000 units, batch normalization, and ReLU activation before the output transformation. For SVHN, we use ResNet-18 as implemented in \texttt{torchvision.models}, without any pre-trained weights. Finally, for both CIFAR-10 and CIFAR-100, we use ResNet-34 (again in \texttt{torchvision.models}) without pre-training. Both of the ResNet models used do not flatten the images, but rather take each RGB image as-is.

\paragraph{Algorithms}
Just as in \S{\ref{sec:empirical_synthetic}}, we are testing ERM, Flooding, SAM, and SoftAD. Again we use the cross entropy loss, and run for 500 epochs. However, instead of Adam as the base optimizer, here we use vanilla SGD with a fixed step size of $0.1$, and momentum parameter of $0.9$. For all datasets, we use a mini-batch size of 200. All these settings match the experimental setup of \citet[\S{4.2}]{ishida2020a}.\footnote{Batch size is not given in the original Flooding paper, but the size of 200 was confirmed by means of a private communication with the authors.}

\paragraph{Hyperparameter selection}
Once again we select hyperparameters for Flooding, SoftAD, and SAM from a grid of candidate values, such that the classification accuracy on validation data is maximized. Unlike \S{\ref{sec:empirical_synthetic}} however, here we use different grids for each method. For Flooding, we follow the setup of the original paper, choosing from ten values: $\{0.01, 0.02,\ldots,0.1\}$. For SAM, once again we follow the original paper (their \S{3.1}), which for analogous tests utilized the set $\{0.01, 0.02, 0.05, 0.1, 0.2, 0.5\}$. Finally, for SoftAD we match the set size used by Flooding (i.e., ten) by taking the union of $\{0.15, 0.25, 0.35, 0.75\}$ and the set used by SAM.

\clearpage

\paragraph{Comparison with iFlood}
As mentioned in Remark \ref{rmk:flood_variants}, during the review phase of this work, the iFlood method of \citet{xie2022a} was brought to our attention, and we have run additional tests analogous to those described in the preceding paragraphs, but this time comparing ERM, iFlood, and SoftAD. The results are given below in Table \ref{table:loss_gen_gap_iFlood}, Figures \ref{fig:benchmarks_1_iFlood}--\ref{fig:benchmarks_norm_iFlood}, and Table \ref{table:hyperparameters_iFlood}, in that order.

\begin{table}[h!]
\begin{center}
\begin{tabular}{|c||c|c|c|c|}
\hline
 &  CIFAR-10 & CIFAR-100 & Fashion & SVHN \\
\hline\hline
ERM & 3.252 & 7.695 & 0.806 & 0.726\\
\hline
iFlood & 1.468 & \textbf{2.558} & \textbf{0.273} & 0.404\\
\hline
SoftAD & \textbf{1.072} & 2.696 & 0.463 & \textbf{0.397}\\
\hline
\end{tabular}
\end{center}
\caption{Generalization gap (test - training) for trial-averaged cross entropy loss after final epoch.}
\label{table:loss_gen_gap_iFlood}
\end{table}

\begin{figure}[h!]
\centering
\includegraphics[width=0.45\textwidth]{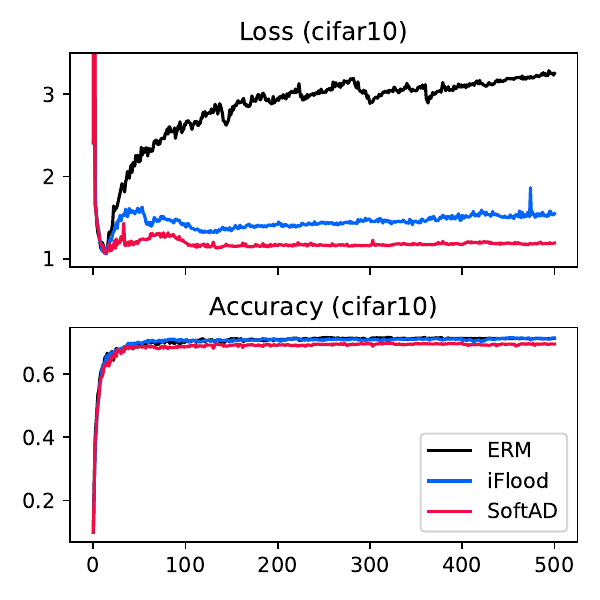}\includegraphics[width=0.45\textwidth]{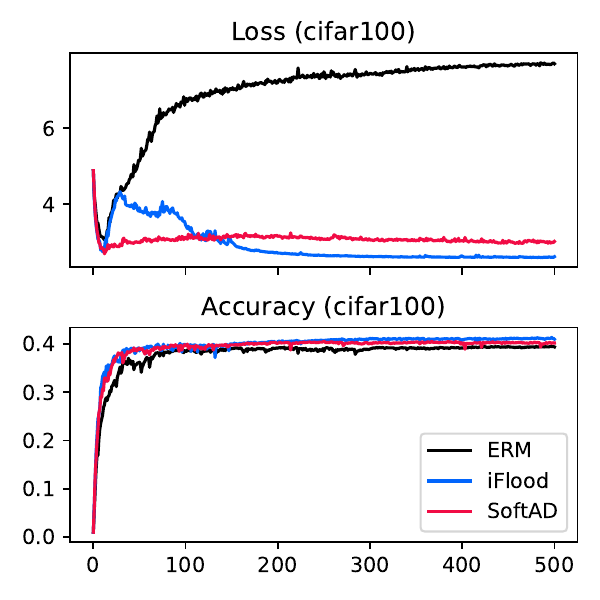}
\caption{Trajectories over epochs for average test loss (top row) and test accuracy (bottom row). Horizontal axis is epoch number. Columns are associated with the CIFAR-10 and CIFAR-100 datasets (left to right).}
\label{fig:benchmarks_1_iFlood}
\end{figure}
\begin{figure}[h!]
\centering
\includegraphics[width=0.45\textwidth]{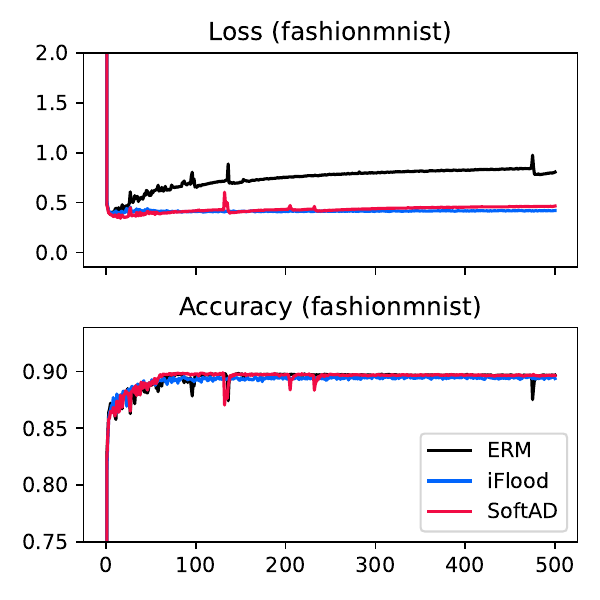}\includegraphics[width=0.45\textwidth]{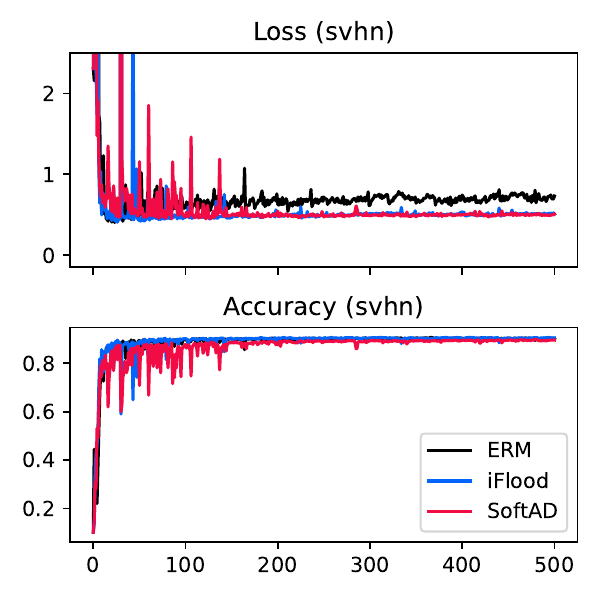}
\caption{Analogous to Figure \ref{fig:benchmarks_1_iFlood}, but with FashionMNIST and SVHN datasets.}
\label{fig:benchmarks_2_iFlood}
\end{figure}
\begin{figure}[h!]
\centering
\includegraphics[width=0.45\textwidth]{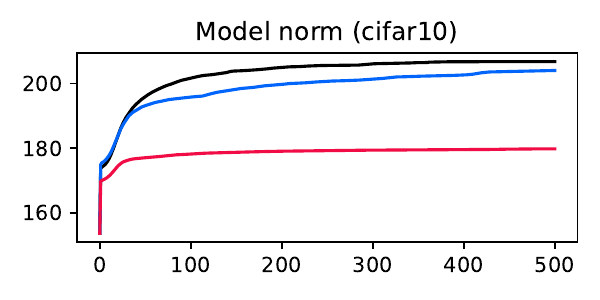}\includegraphics[width=0.45\textwidth]{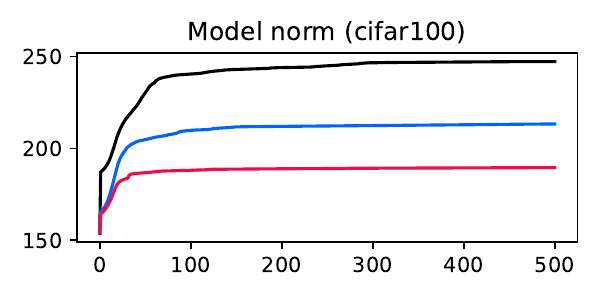}\\\includegraphics[width=0.45\textwidth]{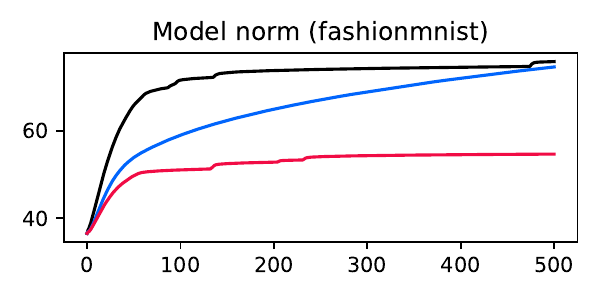}\includegraphics[width=0.45\textwidth]{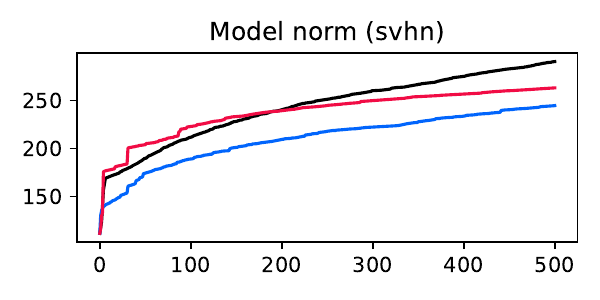}
\caption{Model norm trajectories over epochs for each dataset in Figures \ref{fig:benchmarks_1_iFlood}--\ref{fig:benchmarks_2_iFlood}.}
\label{fig:benchmarks_norm_iFlood}
\end{figure}
\clearpage
\begin{table}[h!]
\begin{center}
\begin{tabular}{|c||c|c|c|c|}
\hline
 & CIFAR-10 & CIFAR-100 & Fashion & SVHN \\
\hline\hline
iFlood & 0.04 {\tiny (0.05)} & 0.06 {\tiny (0.04)} & 0.15 {\tiny (0.10)} & 0.10 {\tiny (0.05)}\\
\hline
SoftAD & 0.12 {\tiny (0.06)} & 0.34 {\tiny (0.28)} & 0.01 {\tiny (0)} & 0.12 {\tiny (0.12)}\\
\hline
\end{tabular}
\end{center}
\caption{Hyperparameters selected by validation for each method (averaged over trials). Flooding and SoftAD have threshold $\theta$; SAM has radius parameter. Standard deviation (over trials) is given in small-text parentheses.}
\label{table:hyperparameters_iFlood}
\end{table}

\clearpage

\subsection{Linear binary classification}

In Figure \ref{fig:demo_binarylinear}, we compare ERM with Flood and SoftAD run with a \emph{common} threshold level of $\theta = 0.25$, using the ``two Gaussians'' and ``sinusoid'' data described in \S{\ref{sec:empirical_synthetic}}, and a simple linear model, i.e., a feed-forward neural network with no hidden layers. Training and test sizes match those described in \S{\ref{sec:empirical_synthetic}}. Even with a very simple linear model, it is clear that SoftAD can be used to achieve competitive accuracy at much larger loss levels. Note that in the case of ``sinusoid,'' the average loss does not reach the threshold $\theta$, and thus Flooding is identical to ERM. These basic trends hold over a range of thresholds $\theta$ and re-scaling parameters $\sigma$ (i.e., using $\phi((x-\theta)/\sigma)$ with $\sigma \neq 1$). These trends are captured by the heatmaps given in Figure \ref{fig:demo_binarylinear_heatmaps}, where for each setting of $\theta$ (for SoftAD and Flooding) and $\sigma$ (for SoftAD only), we generate a fresh dataset. Clearly taking the threshold level far too high leads to arbitrarily bad performance, but below a certain level, similar performance is observed over a wide range of values. It is interesting to note how while test loss changes in a rather predictable continuous fashion as a function of $\theta$, the test \emph{accuracy} drops in a much sharper manner when $\theta$ is set too high in the case of SoftAD, whereas this drop is smoother in the case of Flooding. That said, these trends are only within the confines of this very simple linear model example using full batch, and tend to change (even with the same model) as we modify the mini-batch size.

\begin{figure}[h!]
\centering
\includegraphics[width=1.0\textwidth]{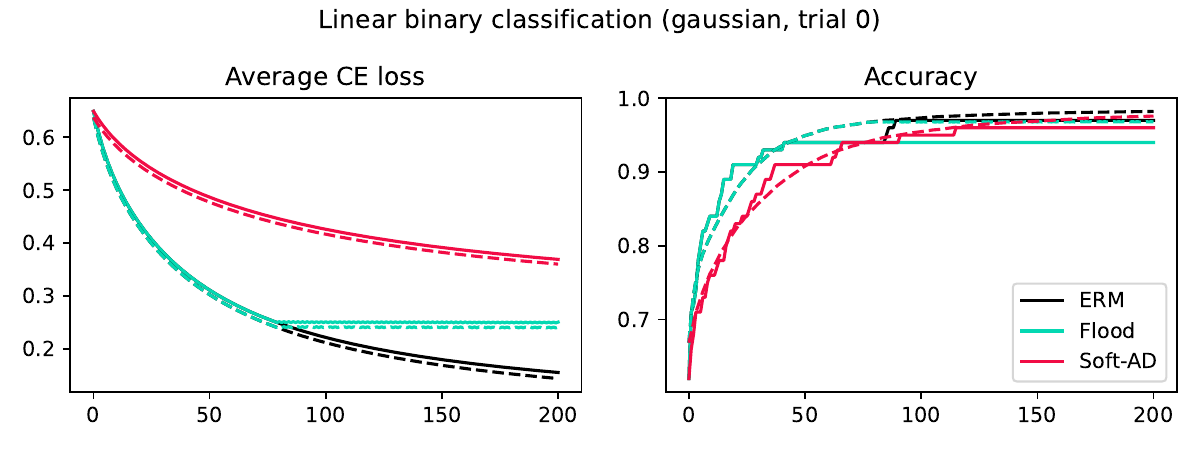}\\
\includegraphics[width=1.0\textwidth]{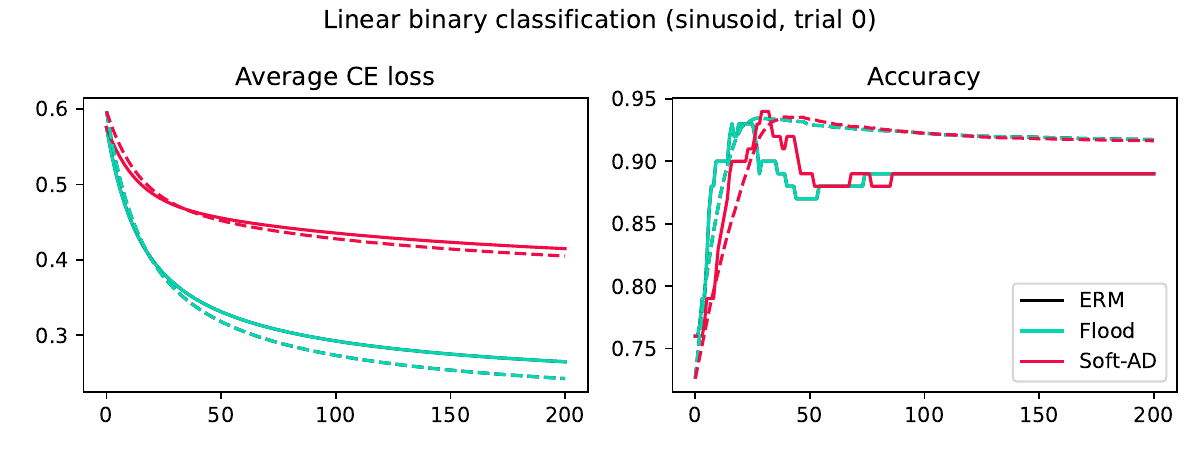}\\
\caption{Average cross entropy loss and accuracy over epochs (full batch) for each method.}
\label{fig:demo_binarylinear}
\end{figure}

\begin{figure}[h!]
\centering
\includegraphics[width=0.5\textwidth]{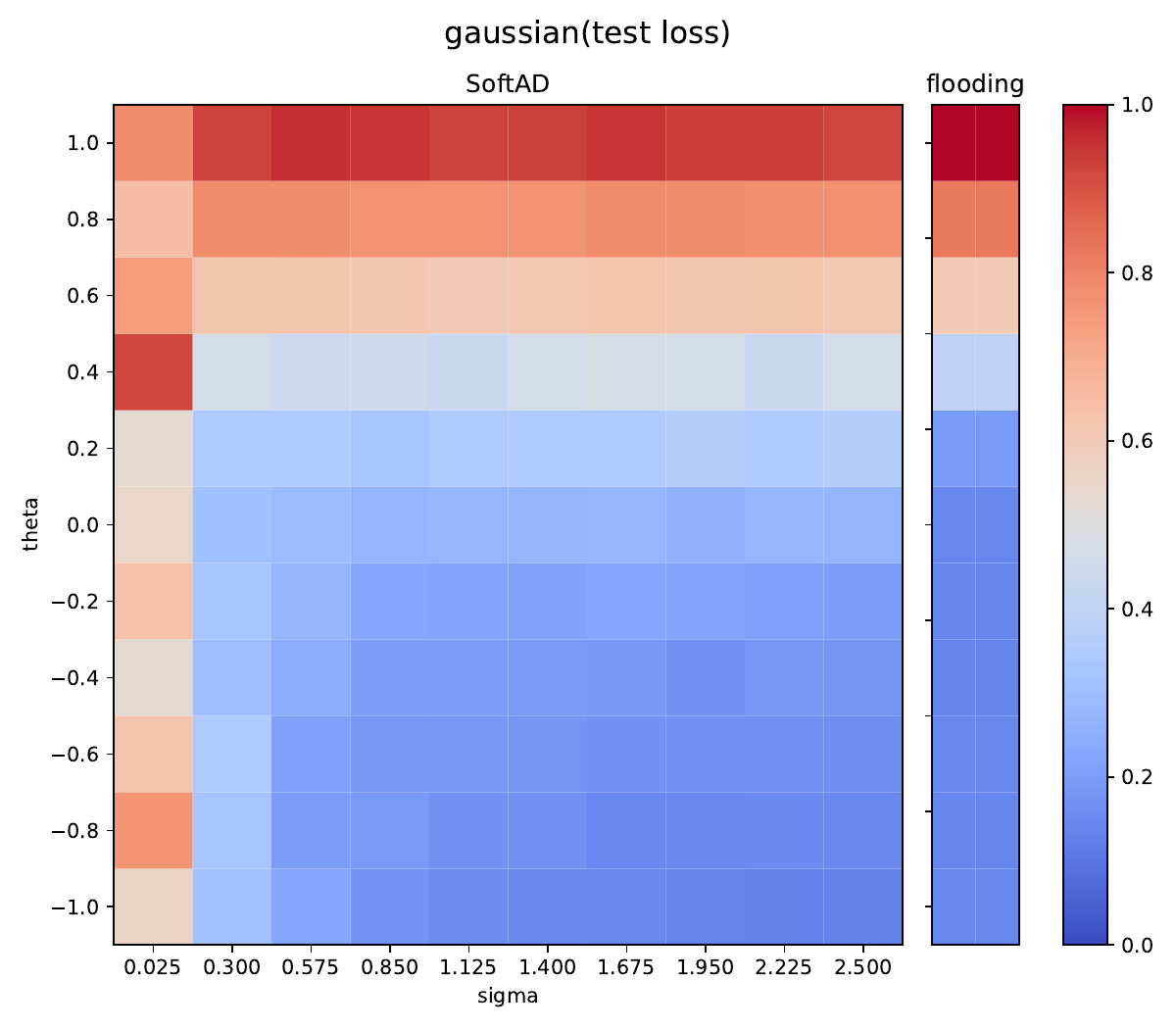}\includegraphics[width=0.5\textwidth]{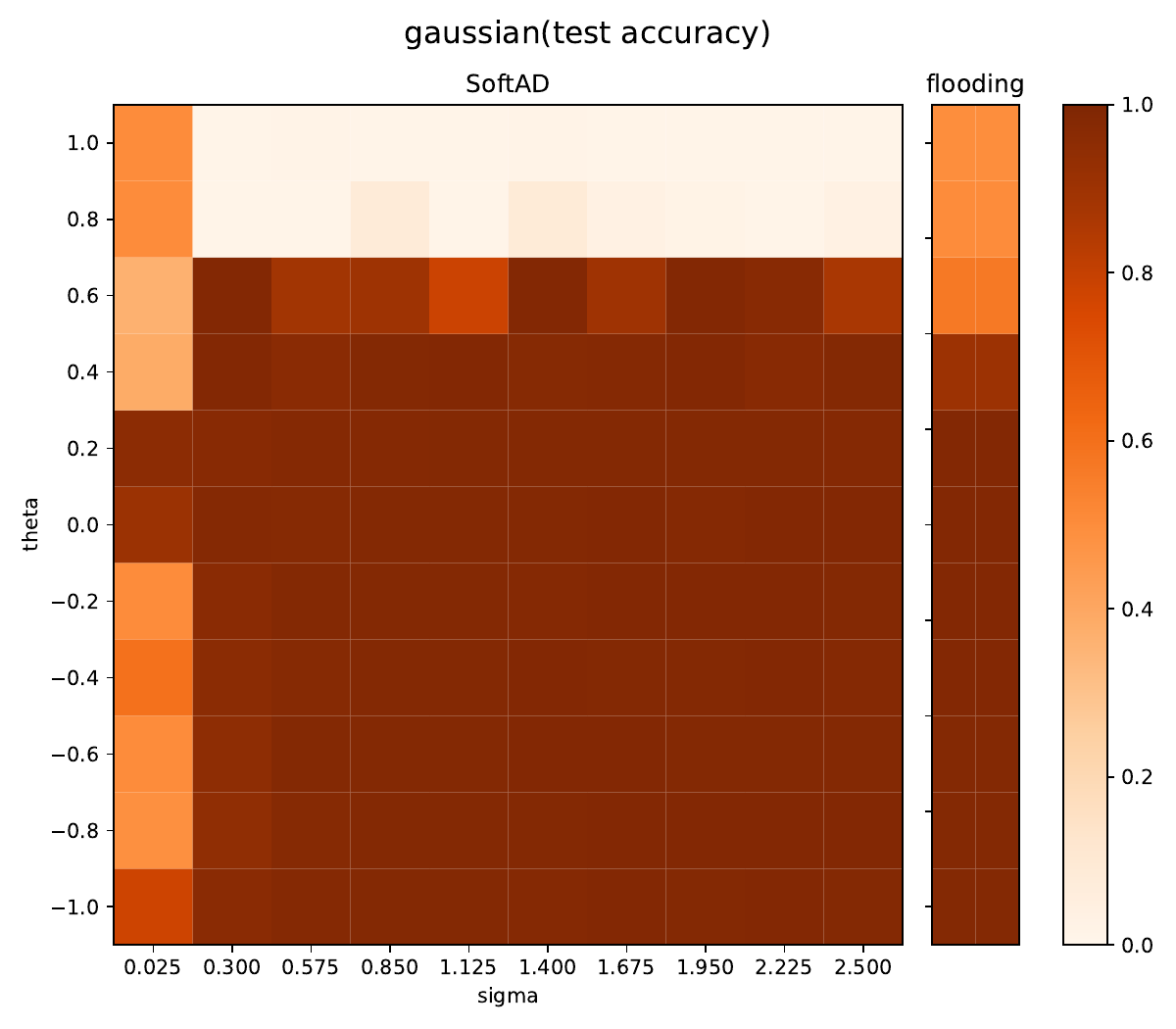}\\
\includegraphics[width=0.5\textwidth]{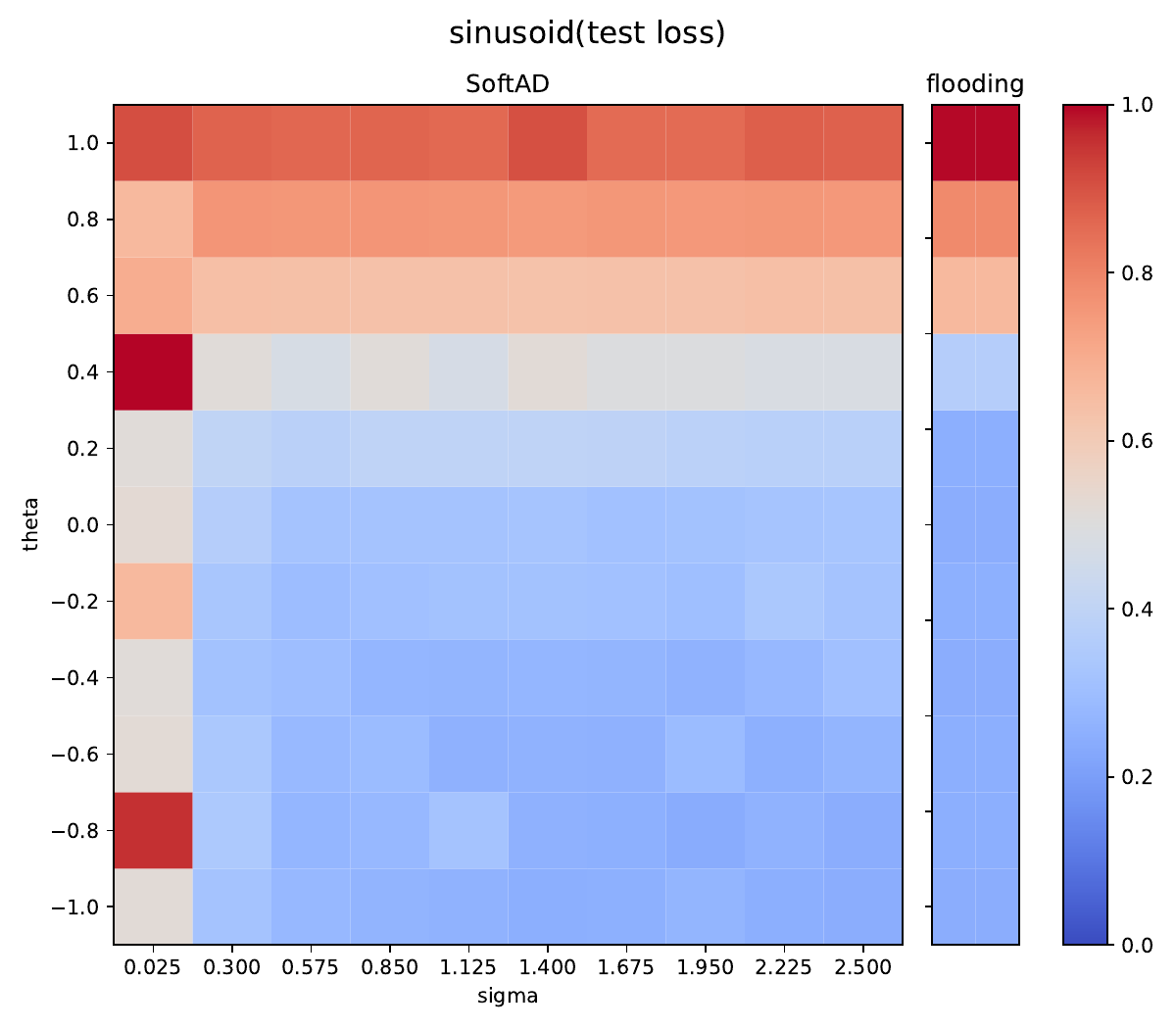}\includegraphics[width=0.5\textwidth]{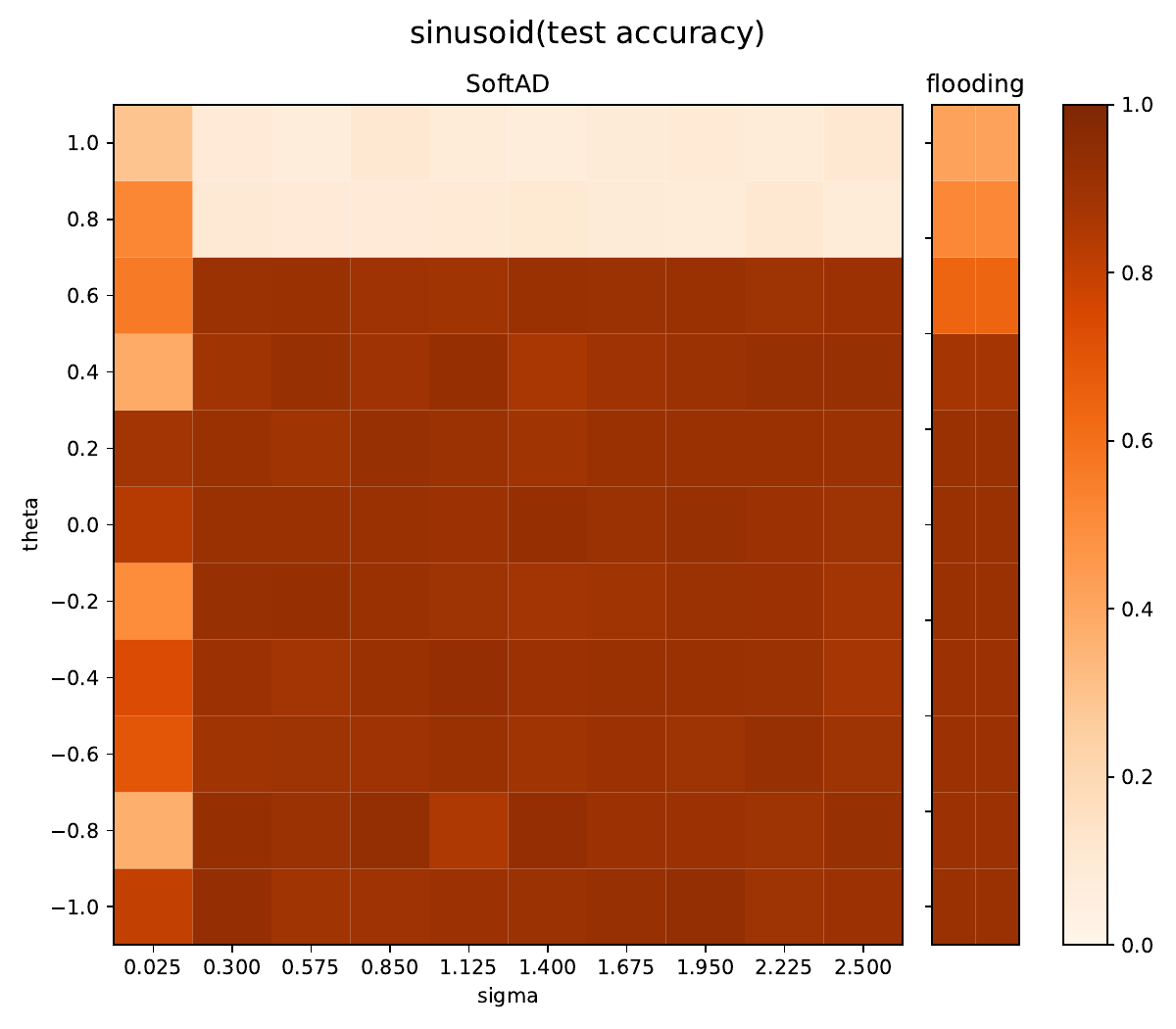}
\caption{Test loss and accuracy heatmaps for Flooding and SoftAD, depending on threshold level (denoted ``theta'') and scaling parameter (denoted ``sigma'').}
\label{fig:demo_binarylinear_heatmaps}
\end{figure}

\clearpage

\bibliography{refs/refs}

\end{document}